\documentclass[10pt,twocolumn,letterpaper]{article}

\usepackage[pagenumbers]{wacv}

\usepackage{algorithm}
\usepackage{algpseudocode}
\usepackage{amsmath}
\usepackage{graphicx}
\usepackage{makecell}
\usepackage{amsthm}
\newtheorem{theorem}{Theorem}

\newtheorem{proposition}{Proposition}

\theoremstyle{remark}

\theoremstyle{definition}

\usepackage{adjustbox}
\usepackage{multirow}
\usepackage{xspace}

\newcommand{\Renyi}{R\'{e}nyi\xspace}

\makeatletter
\newcommand{\equationlabel}[1]{\gdef\df@label{#1}}
\makeatother
\definecolor{wacvblue}{rgb}{0.21,0.49,0.74}
\usepackage[pagebackref,breaklinks,colorlinks,allcolors=wacvblue]{hyperref}

\title{TADP: Dual-Branch Temporally Adaptive Differential Privacy for Expression-Aware Video Anonymization}

\author{Feng Xu$^{1,2}$, Xun Li$^{2}$, Lars Petersson$^{2}$, Yulei Sui$^{1}$, David Ahmedt-Aristizabal$^{2}$, Dadong Wang$^{2}$ \\
$^{1}$UNSW Sydney,
$^{2}$CSIRO, Australia\\
}

\begin{document}
\maketitle

\begin{abstract}
We present TADP, a temporally adaptive differential privacy (DP) framework for inference-time privacy-preserving video generation under fixed-length clip-replacement adjacency.
TADP uses a dual-branch design to extract identity and expression representations, clips and privatizes both branches with temporally adaptive Gaussian mechanisms whose noise scales depend only on preceding privatized releases within the same clip and are bounded below by history-uniform calibrated minima, and accounts for their joint release via adaptive \Renyi Differential Privacy (RDP) composition.
The clip-level RDP guarantee is converted to $(\varepsilon,\delta)$-differential privacy, while soft quantization and subsequent temporal processing are applied only after privatization as DP post-processing.
Experiments on facial expression datasets show reduced identity leakage under evaluated attacks, usable perceptual quality and temporal stability, and high facial expression recognition utility relative to the evaluated privacy-preserving baselines.
TADP formally satisfies clip-level DP and empirically provides a favorable privacy--utility trade-off under the evaluated threat models and datasets.

\end{abstract}

\section{Introduction} \label{intro}

Privacy preservation is increasingly important for visual data, driven by regulations such as the General Data Protection Regulation~\cite{GDPR2016} and the California Consumer Privacy Act~\cite{ccpa}, as well as growing user expectations. In domains such as healthcare, human-computer interaction, and social media~\cite{liu2017facial,faria2017affective,ahmedt2018motion,ahmedt2018deep,sonkusare2019detecting, yang2021undisturbed, serfaty2023generative, cantrill2024orientation, roshanzamir2024happens}, videos often contain sensitive facial identities but remain valuable for analysis and viewing. This creates a difficult requirement: privacy mechanisms must conceal identity without destroying visual content, temporal dynamics, or facial expression cues.

\begin{figure}[t]
   \centering
   \includegraphics[width=0.8\linewidth]{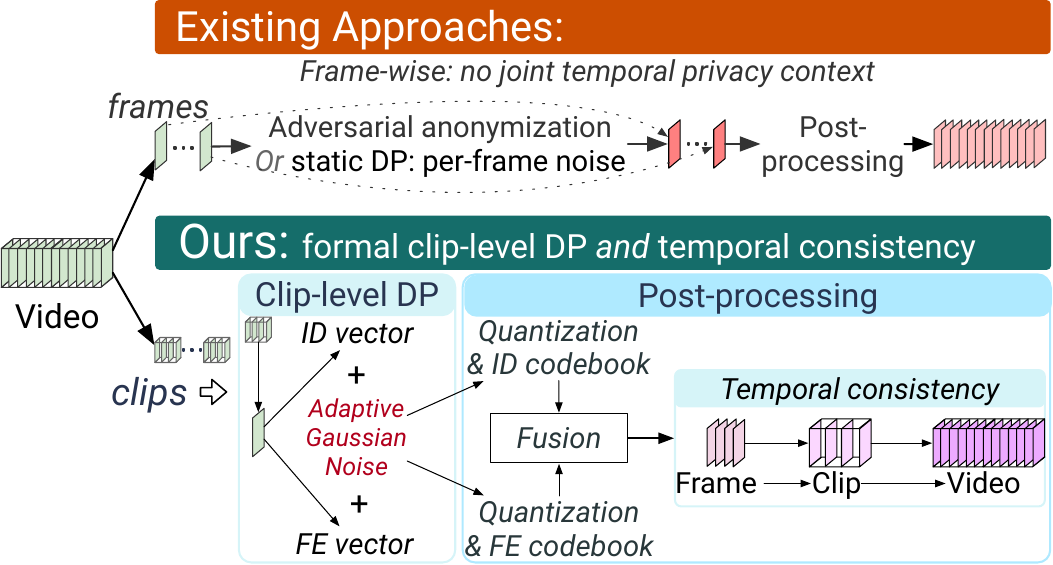}
   \caption{\textbf{TADP} jointly privatizes identity and expression representations over a fixed-length clip, providing a clip-level $(\varepsilon,\delta)$-DP guarantee. All subsequent temporal modeling and decoding operate only on privatized representations.} 
   \label{fig:overall}
\end{figure}

Formal differential privacy (DP) has been applied to visual anonymization. Image methods privatize disentangled identity representations or generative latents~\cite{wen2024differential,li2021differentially}, and provide protection through generative or pixel-space DP~\cite{croft2021obfuscation} and perceptual indistinguishability~\cite{chen2021perceptual}. 
Learning-based methods suppress sensitive attributes empirically but generally lack formal DP accounting~\cite{wu2020privacy,dave2022spact,fioresi2023ted,mi2023privacy,Xu_2025_WACV,mi2024minusface}.
For video, Video-DPRP reconstructs sample-wise DP videos for activity recognition~\cite{nken2025video}, while Fioresi \etal anonymize video embeddings without formal DP~\cite{fioresi2026privacy}. 
Our contribution lies in integrating these ingredients for expression-aware reconstruction with joint clip-level accounting of identity and expression releases, adaptation based only on privatized history, and post-privatization temporal modeling.
In this paper, video anonymization is the application setting, while the formal privacy unit is a fixed-length clip. This provides a clear clip-level guarantee for each processed segment without implying an unchanged whole-video DP guarantee for variable-length videos.

To this end, we study privacy-preserving video generation for facial expression analysis, aiming to:
(1) conceal the original facial identity while preserving temporal expression dynamics,
(2) maintain temporal consistency under a formal privacy guarantee, and
(3) avoid excessive perceptual degradation relative to the input video. 
We propose a temporally adaptive differential privacy (TADP) framework for identity-protected and expression-aware video generation, as illustrated in Figure~\ref{fig:overall}. TADP reconstructs videos from privatized latent representations using temporal contextualization and a frame-level generative decoder. 
Its privacy design applies branch-specific adaptive Gaussian mechanisms to identity and expression representations and assigns a smaller target privacy loss to the identity branch. Soft quantization is applied only after privatization.
Both branches are privatized because expression representations may still contain data-dependent and identity-correlated cues; privatizing only the identity branch would not justify an end-to-end DP release.
Quantization, fusion, temporal modeling, and decoding then operate only on privatized representations and constitute DP post-processing with no additional privacy loss.

We formalize privacy under fixed-length clip-level replacement adjacency, using a fixed $T$-frame clip as the accounting and release unit. Two clips are adjacent if one admissible clip is replaced by another satisfying the same public constraints, possibly with different identities and expression dynamics.
Within each clip, we compose the conditional RDP losses of the identity and expression branches within each frame and across all $T$ frames, then convert the resulting RDP curve to $(\varepsilon,\delta)$-DP. 
Releasing multiple clips from the same source video requires additional RDP composition before a single $(\varepsilon,\delta)$ conversion, with source-video duration treated as public.
The guarantee excludes training-membership privacy and metadata outside the mechanism. We separately evaluate video-level utility, temporal consistency, identity leakage, and component ablations.

In summary, our main contributions are:
\begin{itemize}
    \item A dual-branch DP mechanism that jointly accounts for identity and expression releases across a fixed-length clip using RDP composition.
    \item A temporally adaptive mechanism with noise scales conditioned only on privatized history, asymmetric branch-wise target-loss allocation, and downstream processing restricted to privatized representations as DP post-processing.
    \item Extensive evaluations of identity leakage under anonymization-aware attacks, facial-expression utility, perceptual quality, temporal consistency, and cross-domain stress testing, supported by ablations of the privacy and temporal components.
\end{itemize}

\section{Related Work}

\subsection{Diffusion Models}
Diffusion models~\cite{ho2020denoising,song2020denoising,ho2022video,rombach2022high,peebles2023scalable,chen2024videocrafter2,podell2024sdxl} have become a leading paradigm for image and video generation and editing. By reconstructing fine visual details from compact representations, they bridge high-level semantics and low-level fidelity, making them attractive for privacy-preserving video generation where perceptual quality matters. In our framework, however, SDXL is used as a frame-level synthesis decoder rather than as the source of temporal modeling: it maps each temporally contextualized privatized representation back to an image frame. 
Cross-frame and cross-clip consistency are instead modeled by an intra-clip Transformer and an inter-clip GRU operating only on privatized representations.

\subsection{Privacy Preservation Approaches}

\noindent\textbf{Formal privacy for visual reconstruction.}
DP bounds the distinguishability of adjacent inputs, while RDP supports sequential composition~\cite{kairouz2015composition,mironov2017renyi}. For images, prior methods privatize disentangled identity representations or generative latents~\cite{wen2024differential,li2021differentially}, and protect reconstructed outputs through generative or pixel-space DP~\cite{croft2021obfuscation} and perceptual indistinguishability~\cite{chen2021perceptual}. Thus, latent perturbation, identity-aware factorization, and protected image reconstruction are not themselves unique to our work.
For video, Video-DPRP reconstructs sample-wise DP videos through random projection for activity recognition~\cite{nken2025video}, while Fioresi \etal anonymize video embeddings without formal DP~\cite{fioresi2026privacy}. Our distinction lies in jointly accounting for identity and expression releases across a fixed-length clip, adapting noise only from preceding privatized outputs under history-uniform bounds, and restricting subsequent temporal modeling and decoding to privatized representations as DP post-processing. The novelty therefore concerns this combination rather than the general use of DP in visual latents or reconstructed videos.

\noindent\textbf{Learning-based privacy preservation.}
Non-DP methods learn privacy-preserving transformations or representations through supervised, adversarial, or self-supervised objectives~\cite{wu2020privacy,dave2022spact,fioresi2023ted,Xu_2025_WACV,xu2026knowledge}. Controlled~\cite{Xu_2025_WACV} suppresses identity while compensating expression features, Knowledge Priors~\cite{xu2026knowledge} targets identity-disentangled open-set video FER, and SPAct~\cite{dave2022spact} and TED-SPAD~\cite{fioresi2023ted} exploit video dynamics for privacy preservation. These methods provide empirical rather than formally accounted privacy. In contrast, our framework jointly accounts for identity and expression releases across frames and restricts temporal modeling and decoding to privatized representations as DP post-processing.

\section{Method}
\begin{figure*}[!ht]
    \centering
    \includegraphics[width=0.85\linewidth]{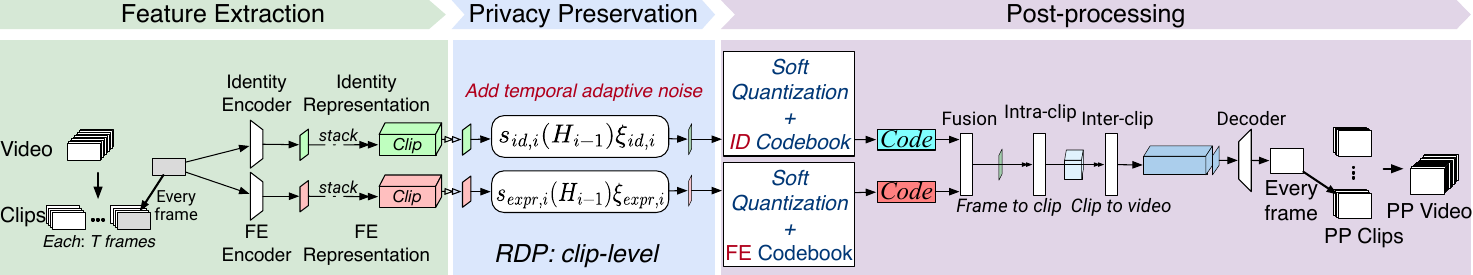}
    \caption{
    Overview of the TADP framework. The pipeline contains three stages: Feature Extraction, Privacy Preservation, and Post-processing. A video is divided into fixed-length clips, and every frame is processed by an Identity Encoder and an FE Encoder to obtain identity and facial-expression (FE) representations, which are stacked into clip-level features. In Privacy Preservation, each branch is clipped and privatized by temporally adaptive Gaussian noise, and the identity and FE RDP losses are composed to obtain the reported clip-level RDP guarantee. In Post-processing, the privatized identity and FE representations are mapped by Soft Quantization with the ID Codebook and FE Codebook into soft-quantized codebook representation, followed by Fusion, Intra-clip temporal modeling, Inter-clip memory, Frame to clip aggregation, Clip to video assembly, and a Decoder that produces PP Clips and the final PP Video. All post-privatization operations use only privatized representations and public metadata, incurring no additional privacy loss.
    }
    \label{fig:framework}
\end{figure*}

As shown in Figure~\ref{fig:framework}, TADP adopts a dual-branch architecture for expression-aware video anonymization. Given a clip $X=(x_1,\ldots,x_T)$, identity and expression encoders extract frame-level representations $z_i^{id}$ and $z_i^{expr}$. After branch-specific clipping, temporally adaptive Gaussian mechanisms privatize them with noise scales conditioned only on previously privatized outputs. The adaptive scale is learned on the training split and fixed before protected inference; it depends only on preceding privatized history and satisfies a history-uniform lower bound. Asymmetric target-loss allocation applies stronger noise to identity while preserving expression features. The privatized representations are softly quantized with separate codebooks, fused, and decoded by an SDXL decoder to generate the anonymized clip $\hat{X}$. For multi-clip videos, the post-processing decoder aggregates ordered privatized clip representations to maintain intra- and inter-clip temporal coherence. Since post-processing uses only DP representations and public metadata, it incurs no additional privacy loss. Thus, the privacy claim requires three conditions: both data-dependent branches are privatized, adaptive noise uses only previous DP outputs with a valid lower bound, and all downstream operations use only DP representations and public metadata. We next define the privacy setting, privatization mechanism, clip-level RDP guarantee, post-processing analysis, and training objectives.

\subsection{Problem Formulation and Privacy Setting}
We consider inference-time privacy for submitted clips, rather than training-set membership privacy. Since video length varies, whole-video DP under a fixed budget would require length-dependent per-frame noise, leading to inconsistent privacy--utility trade-offs. TADP uses a fixed $T$-frame clip as the accounting unit for one mechanism release, and jointly releasing multiple clips requires additional composition.

Let $X=(x_1,\ldots,x_T)$ and $X'=(x'_1,\ldots,x'_T)$ be two clips satisfying the same public preprocessing constraints. We use clip-level replacement adjacency: $X\sim X'$ if one complete preprocessed clip is replaced by another, possibly with different identities and expressions. A randomized mechanism $\mathcal{M}$ satisfies $(\alpha,\varepsilon)$-RDP if, for all $X\sim X'$,  $D_\alpha(\mathcal{M}(X)\|\mathcal{M}(X'))\leq\varepsilon$, where $\alpha>1$ and $D_\alpha$ is the order-$\alpha$ R\'enyi divergence~\cite{mironov2017renyi}. This bounds distinguishability between any two input clips, but does not cover training membership or metadata outside the mechanism.

Source-video duration is public. We split each video into non-overlapping $T$-frame clips in temporal order; if the final segment has fewer than $T$ frames, it is padded by repeating its last frame. Generated padded outputs are removed with the public padding mask, preserving the original duration. The reported $(\varepsilon,\delta)$ guarantee applies to one preprocessed clip; jointly releasing multiple clips from a source video requires composing their privacy losses, as discussed in~\cref{sec:privacy-guarantees}.

\subsection{Dual-Branch Temporally Adaptive Privatization}\label{sec:adaptive-dp}
Under the above privacy setting, the dual-branch mechanism must bound encoder sensitivity, privatize both data-dependent branches, and use temporal information without accessing unprivatized history. We therefore clip branch representations and apply branch-specific Gaussian mechanisms whose scales depend only on previously privatized outputs. The mechanism is applied independently to each preprocessed $T$-frame clip, and the history contains only privatized outputs from the same clip.

\noindent\textbf{Branch-wise Representation Clipping.}
For branch $b\in\{id,expr\}$, representation is clipped with threshold $C_b$:
\begin{align}
\bar z_i^b=z_i^b \min \! \left(1,\frac{C_b}{\|z_i^b\|_2}\right),
\label{eq:branch_clip}
\end{align}
where $C_b>0$ bounds the frame-level encoder output norm in branch $b$, not gradients. Features with norms at most $C_b$ are unchanged; the rest are rescaled to norm $C_b$. We use separate $C_{id}$ and $C_{expr}$ for their different norm distributions. Both are selected on the calibration split and fixed before inference. For adjacent clips, the per-step global sensitivity of branch $b$ is bounded by:
\begin{align}
\Delta_{b,i}:= \sup_{X\sim X'} \| \bar z_i^b-\bar z_i^{b \prime} \|_2 \leq 2C_b.
\label{eq:branch_sensitivity}
\end{align}
The full $T$-frame clip remains the private record. Per-step bounds are used only for sequential accounting within the clip.

\noindent\textbf{Temporally Adaptive Gaussian Privatization.}
Before step $i$, let the joint privatized history, $H_{i-1}$: $H_{i-1}=\left\{(\tilde z_j^{id},\tilde z_j^{expr})\right\}_{j=1}^{i-1}$. The mechanism adds Gaussian noise separately to each branch:
\begin{align}
\tilde z_i^b= \bar z_i^b+s_{b,i}(H_{i-1}) \xi_{b,i}, \qquad \xi_{b,i} \sim \mathcal{N}(0,I).
\label{eq:dual_branch_mechanism}
\end{align}
$s_{b,i}$ denotes the standard deviation applied independently to each coordinate of the $d_b$-dimensional representation, rather than the expected norm of the complete perturbation vector. Therefore, for $\xi_{b,i}\sim\mathcal{N}(0,I_{d_b})$, the perturbation norm concentrates approximately around $s_{b,i}\sqrt{d_b}$. We report both the per-coordinate scale and the realized perturbation norm in \cref{tab:noise_statistics,tab:signal-noise-statistics}.
The scale $s_{b,i}$ is adaptive, but privacy requires the history-uniform lower bound:
\begin{align}
s_{b,i}(H_{i-1}) \geq s_{b,\min,i}>0.
\label{eq:min_noise}
\end{align}
This is a hard privacy constraint: the adaptive module may increase noise for robustness or temporal behavior, but cannot reduce it below the calibrated minimum.

\noindent\textbf{Adaptive-scale Parameterization.}
The joint privatized history is summarized by a recurrent encoder. Specifically, the concatenated identity and expression releases from the preceding step are projected and used to update a recurrent state: $q_{i-1} = P_h[\tilde z_{i-1}^{id};\tilde z_{i-1}^{expr}]$, $h_{i-1} = \mathrm{GRU}(q_{i-1},h_{i-2})$, where $h_0$ is a fixed zero vector. For $i=1$, no privatized representation has yet been released, so the scale is predicted directly from $h_0$. For each branch $b$, a branch-specific MLP $g_b$ predicts the effective noise scale as: $s_{b,i}(H_{i-1})=s_{b,\min,i}\left[1+\text{softplus}\!\left(g_b(h_{i-1})\right)\right]$.
The Softplus output is nonnegative; consequently, this parameterization guarantees $s_{b,i}(H_{i-1})\geq s_{b,\min,i}$ for every possible privatized history. The scale predictor receives only preceding privatized releases and never accesses raw frames or unprivatized representations. Architectural and training details of this module are given in \cref{supp_adaptive_scale}.

\noindent\textbf{Asymmetric Branch-wise Target Privacy-loss Allocation.}
Identity features should be more strongly suppressed, while expression features should preserve task-relevant dynamics. 
We therefore allocate a target privacy loss asymmetrically to calibrate branch-specific noise lower bounds. Let $\rho_{id},\rho_{expr}>0$ and $\rho_{id}+\rho_{expr}=1$. For a target per-step RDP budget $\bar \epsilon_i(\alpha)$, set $\epsilon_{b,i}^{target}(\alpha)=\rho_b \bar \epsilon_i(\alpha)$ for $b \in \{id,expr\}$. The minimum noise is:
\begin{align*}
s_{b,\min,i} \geq \Delta_{b,i} \max_{ \alpha \in \mathcal{A}} \sqrt {\frac{\alpha}{2 \rho_b \bar \epsilon_i(\alpha)}}.
\end{align*}
Choosing $\rho_{id}<\rho_{expr}$ assigns less privacy loss and thus stronger noise to the identity branch, after accounting for sensitivity. The maximum over RDP orders makes the bound valid for the full accountant curve. 
%These coefficients calibrate branch-specific noise but do not weight privacy losses in composition.
These coefficients only calibrate the branch-specific noise lower bounds; the accountant still sums the actual identity and expression RDP losses without weighting.

\subsection{Privacy Guarantees}\label{sec:privacy-guarantees}
We now prove the guarantee of the mechanism in \cref{sec:adaptive-dp}. The analysis first gives a history-uniform per-step RDP bound for each branch-specific Gaussian release, then composes identity and expression losses within each frame and across the $T$-frame clip. The resulting clip-level RDP bound is finally converted to $(\varepsilon,\delta)$-DP.

\begin{theorem}[Conditional Per-step RDP for Each Branch]\label{thm:branch_rdp}
Fix a branch $b$, a time step $i$, and any possible privatized history $h_{i-1}$. Under \cref{eq:branch_sensitivity,eq:min_noise}, the conditional mechanism in \cref{eq:dual_branch_mechanism} satisfies $(\alpha,\epsilon_{b,i}(\alpha))$-RDP, where
\begin{align}
\epsilon_{b,i}(\alpha) \leq \frac{ \alpha \Delta_{b,i}^2}{2s_{b,i}(h_{i-1})^2} \leq \frac{ \alpha \Delta_{b,i}^2}{2s_{b,\min,i}^2}.
\label{eq:branch_rdp}
\end{align}
\end{theorem}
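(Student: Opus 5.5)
The plan is to fix the history $h_{i-1}$ and treat the step-$i$ release of branch $b$ as an ordinary Gaussian mechanism whose scale is a constant. Once $H_{i-1}=h_{i-1}$ is fixed, the scale $s:=s_{b,i}(h_{i-1})$ is a deterministic positive number. The adaptive module is fixed before protected inference and reads only $h_{i-1}$, never raw frames or unprivatized features, so $s$ is the same for both adjacent clips $X\sim X'$. The conditional output distributions are then $\mathcal{N}(\bar z_i^b, s^2 I_{d_b})$ and $\mathcal{N}(\bar z_i^{b\prime}, s^2 I_{d_b})$.

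I will stress that this conditional statement is over the randomness of $\xi_{b,i}$ only. Since $\xi_{b,i}$ is drawn independently of $h_{i-1}$, conditioning on the history does not alter its $\mathcal{N}(0,I)$ law. The clipped feature $\bar z_i^b$ is a deterministic function of the clip $X$. Handling the adversarial but fixed history is therefore purely a conditioning argument, which is what the later adaptive-composition step will use.

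Next, I will invoke the closed-form R\'enyi divergence between isotropic Gaussians with equal covariance (Mironov \cite{mironov2017renyi}):
\begin{align*}
D_\alpha\bigl(\mathcal{N}(\mu,s^2I)\,\|\,\mathcal{N}(\mu',s^2I)\bigr)=\frac{\alpha\|\mu-\mu'\|_2^2}{2s^2}.
\end{align*}
If a self-contained derivation is preferred, I would obtain this by rotating coordinates so that $\mu-\mu'$ lies along one axis. The other coordinates then contribute zero divergence, and the one-dimensional Gaussian integral is completed by completing the square. Substituting $\mu=\bar z_i^b$ and $\mu'=\bar z_i^{b\prime}$ and taking the supremum over $X\sim X'$ turns $\|\mu-\mu'\|_2$ into at most $\Delta_{b,i}$, by the sensitivity bound \cref{eq:branch_sensitivity}, which is itself at most $2C_b$. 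This gives the first inequality in \cref{eq:branch_rdp}.

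The second inequality follows by monotonicity in $s$. By \cref{eq:min_noise}, which the softplus parameterization guarantees for every history, $s\ge s_{b,\min,i}>0$, so $1/s^2\le 1/s_{b,\min,i}^2$. This bound holds uniformly over all $h_{i-1}$, which is what later allows composition under adaptively chosen scales.

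The main obstacle is not the calculation but the justification that the scale is common to both adjacent inputs. If the history were generated from $X$ and from $X'$ separately, the histories would differ. I therefore plan to state the result explicitly as a bound on the conditional divergence of $\tilde z_i^b$ given $H_{i-1}=h_{i-1}$, where the same $h_{i-1}$ is plugged in for both inputs. This is exactly the form required by the adaptive RDP composition theorem \cite{mironov2017renyi}, to be applied next.
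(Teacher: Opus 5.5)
Your proposal is correct and follows essentially the same route as the paper's proof: condition on $H_{i-1}=h_{i-1}$ so the scale is a common constant, apply the closed-form R\'enyi divergence between equal-covariance Gaussians, then use the sensitivity bound and the history-uniform floor $s_{b,i}(h_{i-1})\ge s_{b,\min,i}$. You also spell out two points the paper's conditioning step leaves implicit: $\xi_{b,i}$ is independent of the history, and the same $h_{i-1}$ is plugged in for both adjacent inputs.
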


\begin{proof}
Conditioned on $H_{i-1}=h_{i-1}$, the adaptive scale is fixed and the mechanism is Gaussian with equal covariance under $X$ and $X'$. The RDP divergence between the two Gaussian distributions is $\alpha\|\bar z_i^b-\bar z_i^{b\prime}\|_2^2/(2s_{b,i}^2)$. Applying the sensitivity and minimum-noise bounds proves the result.
\end{proof}

\begin{theorem}[Dual-Branch Adaptive Clip Composition]\label{thm:rdp_composition}
Suppose the conditional bound in \cref{eq:branch_rdp} holds uniformly for every possible privatized history, every adjacent $X\sim X'$, both branches, and all $i\in\{1,\ldots,T\}$. The joint release of both branches for one complete $T$-frame clip satisfies:
\begin{align}
\epsilon_{\mathrm{clip}}(\alpha) \leq \sum_{i=1}^{T} \left[\epsilon_{id,i}(\alpha)+\epsilon_{expr,i}(\alpha)\right].
\label{eq:clip_rdp}
\end{align}
\end{theorem}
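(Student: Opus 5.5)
The plan is to prove \cref{eq:clip_rdp} with the standard adaptive composition theorem for RDP~\cite{mironov2017renyi}. We apply it twice: first within a frame, across the two branches, and then across the $T$ frames of the clip. We view the full release $\mathcal{M}(X)=(\tilde z_1^{id},\tilde z_1^{expr},\ldots,\tilde z_T^{id},\tilde z_T^{expr})$ as a sequence of $2T$ sub-releases. At step $i$, the identity and expression outputs are drawn conditionally on $H_{i-1}=h_{i-1}$. By \cref{eq:dual_branch_mechanism}, the noises $\xi_{id,i}$ and $\xi_{expr,i}$ are drawn independently and both scales depend only on $h_{i-1}$. So, conditioned on $h_{i-1}$, the joint output at step $i$ is the product of two Gaussians, one per branch, with fixed covariances.

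\textbf{Step 1: per-frame composition.} Rényi divergence is additive over product measures. Hence, for fixed $h_{i-1}$,
\begin{align*}
D_\alpha\bigl(P_i(\cdot\mid h_{i-1})\,\|\,P_i'(\cdot\mid h_{i-1})\bigr)\le \epsilon_{id,i}(\alpha)+\epsilon_{expr,i}(\alpha),
\end{align*}
where each term comes from \cref{thm:branch_rdp}. The right-hand side does not depend on $h_{i-1}$, because \cref{thm:branch_rdp} also supplies the history-uniform bound in terms of $s_{b,\min,i}$. In the composition we take $\epsilon_{b,i}(\alpha)$ to be this uniform bound, which is exactly what the theorem's hypothesis asserts.

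\textbf{Step 2: sequential composition over frames.} We induct on $i$. Write the joint density of $(H_{i-1},\text{step } i)$ under $X$ as $p(h_{i-1})\,p_i(y\mid h_{i-1})$, and similarly under $X'$. Expanding $\exp((\alpha-1)D_\alpha)$ as an integral gives
\begin{align*}
\int p(h)^\alpha p'(h)^{1-\alpha}\Bigl[\int p_i(y\mid h)^\alpha p_i'(y\mid h)^{1-\alpha}dy\Bigr]dh.
\end{align*}
The inner integral is at most $\exp((\alpha-1)(\epsilon_{id,i}+\epsilon_{expr,i}))$ for every $h$. We pull this factor out and apply the inductive hypothesis to the outer integral. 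This yields $D_\alpha$ of the length-$i$ history bounded by $\sum_{j\le i}[\epsilon_{id,j}+\epsilon_{expr,j}]$. At $i=T$ this is \cref{eq:clip_rdp}. Finally, we take the supremum over adjacent $X\sim X'$ to obtain the clip-level RDP.

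\textbf{Main obstacle.} The delicate point is that the noise scale $s_{b,i}(H_{i-1})$ is itself random and data-dependent through the history. Adaptivity is harmless only because two things hold. First, the scale is a deterministic function of the previously \emph{released} outputs, via the GRU with fixed $h_0$ and parameters fixed before inference, so it is identical under $X$ and $X'$ once we condition on $h_{i-1}$. Second, the conditional bound holds uniformly in $h_{i-1}$, which the Softplus lower bound \cref{eq:min_noise} guarantees. I would state these two facts explicitly before the induction, since the pull-out step in Step~2 fails if either the per-step bound depends on $h$ or the scale can see unprivatized data.
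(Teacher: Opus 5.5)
Your proof is correct and follows the same route as the paper: you sum the identity and expression losses within each frame (via additivity of R\'enyi divergence over the conditionally independent Gaussian pair), then apply adaptive RDP composition across the $T$ steps using the history-uniform bound from $s_{b,\min,i}$. The difference is only in detail: you spell out the inductive pull-out argument and explicitly read $\epsilon_{b,i}(\alpha)$ as a history-uniform quantity, both of which the paper's short proof leaves implicit.
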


\begin{proof}
At each time step, releasing both data-dependent branches incurs the sum of their conditional RDP losses by composition. Applying adaptive RDP composition across the $T$ sequential releases gives \cref{eq:clip_rdp}. The result does not require the learned scales to be constant, but it requires a history-uniform upper bound derived from $s_{b,\min,i}$.
\end{proof}

Intuitively, conditioned on previous privatized history, each adaptive release is a Gaussian mechanism with fixed scale, and the history-uniform bound $s_{b,i}(h_{i-1})\ge s_{b,\min,i}$ gives a valid per-step RDP upper bound. The clip accountant sums branch losses per frame and composes them over $T$ frames.

The bound in \cref{eq:clip_rdp} applies to one $T$-frame clip, not automatically to the full source video. Under source-video replacement adjacency, if a video yields $K$ jointly released clips, a conservative guarantee composes across clips:
\begin{align*}
\epsilon_{\mathrm{long}}(\alpha)
\leq{}& \sum_{k=1}^{K}\epsilon_{\mathrm{clip},k}(\alpha) \notag\\
={}&\sum_{k=1}^{K}\sum_{i=1}^{T}
\left[\epsilon_{id,k,i}(\alpha)+\epsilon_{expr,k,i}(\alpha)\right].
\end{align*}
Converting $\epsilon_{\mathrm{long}}(\alpha)$ gives the corresponding source-video $(\varepsilon,\delta)$ guarantee. Thus, multiple clips cannot each use the full target budget while claiming the same budget for the whole video.

\noindent\textbf{Conversion to $(\varepsilon,\delta)$-DP.}
For evaluated RDP orders $\mathcal{A}$, we report:
\begin{align}
\varepsilon(\delta)=\min_{ \alpha \in \mathcal{A}} \left[\epsilon_{\mathrm{clip}}(\alpha) + \frac{\log (1/\delta)}{\alpha-1} \right].
\label{eq:rdp_conversion}
\end{align}
The accountant uses actual branch-specific lower bounds $s_{b,\min,i}$ and sensitivities $\Delta_{b,i}$, not only a nominal base scale such as $\sigma_0$. We report clipping thresholds, sequence length, allocation coefficients, effective-noise bounds, RDP orders, $\delta$, and the minimizing order in \cref{sec:exp}.

\subsection{Soft Quantization and Decoding as DP Post-processing}\label{sec:soft-quantization}

Before soft quantization, we apply a fixed, branch-specific rescaling: $r_b(\tilde z)=a_b\tilde z$, where $a_b$ is selected using only the training split and is fixed before protected inference. The same coordinate system is used for the corresponding codebook entries. Since the same scalar multiplies the clipped representation and its Gaussian perturbation, this operation does not change the noise-to-signal norm ratio and, as deterministic post-processing, does not change the privacy guarantee.

The previous subsection establishes privacy for the jointly privatized latent sequence, while the released output is a decoded video. We therefore ensure that all subsequent operations access only privatized representations and public metadata. At inference, no raw frames, unprivatized features, optical flow, identity embeddings, reference-image conditioning, encoder attention maps, or skip connections from the original video are provided to the decoder. Soft quantization, feature fusion, temporal modeling, and decoding are thus treated as post-processing.

We use $k$ to index clips, $i$ to index frames within a clip, and $b\in\{id,expr\}$ to index branches; codebook entries are indexed only by $j$. For each privatized branch output $\tilde z_{k,i}^b$, the codebook $\{e_j^b\}_{j=1}^{K_b}$ with temperature $\tau_b$ defines:
\begin{align*}
p_{b,j}^{(k,i)}= \frac{\exp(- \| \tilde z_{k,i}^b-e_j^b \|_2^2/ \tau_b)}{\sum_{j'}\exp(-\|\tilde z_{k,i}^b-e_{j'}^b\|_2^2/\tau_b)},  z_{q,k,i}^b=\sum_{j=1}^{K_b} p_{b,j}^{(k,i)}e_j^b.
% \label{eq:soft-assignment}
\end{align*}

This mapping is deterministic given $\tilde z_{k,i}^b$, so it is not an independent DP mechanism and receives no additional privacy budget. Empirically, soft quantization is useful because it maps noisy privatized vectors toward a learned representation manifold before temporal contextualization and decoding, stabilizing reconstruction after Gaussian perturbation. The temperature $\tau_b$ affects empirical smoothing and utility but not the accountant in \cref{eq:clip_rdp}. For each frame, the soft-quantized identity and expression vectors $z_{q,k,i}^{id}$ and $z_{q,k,i}^{expr}$ are fused as $u_{k,i}=\mathrm{Fuse}\!\left(z_{q,k,i}^{id},z_{q,k,i}^{expr}\right)$. 
Intra-clip consistency is modeled by a Temporal Transformer Encoder: $h_{k,1:T}=\mathrm{TransEnc}_{clip}\!\left(u_{k,1:T}+p_{1:T}\right)$, where $p_i$ is a fixed positional embedding and $h_{k,i}$ remains a per-frame representation. Inter-clip consistency is modeled by a boundary-aware GRU. Each clip is summarized as $c_k=\mathrm{AttnPool}(h_{k,1:T})$, the previous boundary as $b_{k-1}=\mathrm{Pool}(h_{k-1,T-L+1:T})$, and the memory is updated by $m_k=\mathrm{GRU}\!\left([c_k;b_{k-1}],m_{k-1}\right)$, with $b_0$ and $m_0$ fixed public constants. The memory is fused back before decoding: $\bar h_{k,i}=\mathrm{MLP}\!\left([h_{k,i};m_k]\right)$.
Accordingly, the released video can be written as $\hat V=F_\theta\!\left(\{Z_{q,k}^{id},Z_{q,k}^{expr}\}_{k=1}^{K}\right)$, where $Z_{q,k}^{b}=(z_{q,k,1}^{b},\ldots,z_{q,k,T}^{b})$. The temporal modules contextualize per-frame DP vectors, while the final generator remains a frame-level SDXL decoder. A projection maps each contextualized vector to the SDXL latent space before frame-wise decoding: $\ell_{k,i}=P_{\theta}(\bar h_{k,i})$, $\hat x_{k,i}=D_{\mathrm{SDXL}}(\ell_{k,i})$.
For reproducibility, the identity encoder, expression encoder, branch-specific codebooks, and SDXL decoder are initialized from the pretrained models described in \cref{sec:exp} and kept frozen. The adaptive-noise scale module, branch-fusion module, Temporal Transformer Encoder, boundary-aware GRU, attention/MLP temporal fusion, and projection layer $P_{\theta}$ are trainable. Thus, raw training videos supervise only the trainable adaptation and post-privatization modules through the losses in \cref{sec:training_loss}; they do not create inference-time conditioning paths from raw frames or unprivatized features.

\begin{proposition}[Post-processing of the Joint DP Representation]\label{prop:postprocessing}

Let $\mathcal{M}_{1:T}(X)$ denote the jointly privatized identity and expression sequences for one clip satisfying $(\alpha,\epsilon_{\mathrm{clip}}(\alpha))$-RDP. Let $F$ contain soft quantization, temporal smoothing, attention fusion, and decoding, and assume that $F$ receives no additional data-dependent input from the original video. Then $\hat X=F\!\left(\mathcal{M}_{1:T}(X)\right)$ also satisfies $(\alpha,\epsilon_{\mathrm{clip}}(\alpha))$-RDP and the corresponding $(\varepsilon,\delta)$-DP guarantee. For a source video with $K$ jointly released privatized clips, the same post-processing argument applies to the video-level decoder $F_\theta$ after the $K$ clip-level releases have first been composed according to \cref{sec:privacy-guarantees}.
\end{proposition}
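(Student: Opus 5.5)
The plan is to reduce everything to the post-processing property of R\'enyi divergence, which is the data-processing inequality. For any (possibly randomized) map $F$ whose internal randomness is independent of the input clip, and any two distributions $P,Q$ on the output space of $\mathcal{M}_{1:T}$, we have $D_\alpha(F_\#P\|F_\#Q)\le D_\alpha(P\|Q)$ for every $\alpha>1$. This is the standard RDP post-processing result of~\cite{mironov2017renyi}. The first step is therefore to check that $F$ fits this template. Soft quantization is a deterministic function of each $\tilde z_{k,i}^b$ given the fixed codebooks and temperatures $\tau_b$. The rescaling $r_b$, the fusion, $\mathrm{TransEnc}_{clip}$ with fixed positional embeddings, AttnPool, the boundary-aware GRU with public initial states $b_0,m_0$, the MLP, $P_\theta$ and $D_{\mathrm{SDXL}}$ all have parameters fixed before protected inference. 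Any decoder sampling noise is drawn independently of $X$. So $F$ is a Markov kernel from privatized sequences and public metadata (padding mask, clip order, duration) to videos, and it is the same kernel for $X$ and for any adjacent $X'$.

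The second step applies this with $P=\mathcal{M}_{1:T}(X)$ and $Q=\mathcal{M}_{1:T}(X')$ for arbitrary $X\sim X'$. Combined with the hypothesis $D_\alpha(P\|Q)\le\epsilon_{\mathrm{clip}}(\alpha)$ from \cref{thm:rdp_composition}, this gives $D_\alpha(\hat X\|\hat X')\le\epsilon_{\mathrm{clip}}(\alpha)$ for every $\alpha\in\mathcal{A}$. Since this holds for each order separately, the whole RDP curve is preserved. The $(\varepsilon,\delta)$ statement then follows in either of two ways. One is to apply the conversion \cref{eq:rdp_conversion} to the preserved curve. The other is to note that $(\varepsilon,\delta)$-DP is itself closed under post-processing, since for any event $S$ on videos the preimage $F^{-1}(S)$ is an event on privatized sequences (averaged over $F$'s randomness).

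For the multi-clip claim, let $\mathcal{M}^{(K)}$ release the $K$ privatized clip sequences jointly. Because each clip's mechanism uses only its own privatized history, adaptive composition as in \cref{sec:privacy-guarantees} gives $D_\alpha(\mathcal{M}^{(K)}(V)\|\mathcal{M}^{(K)}(V'))\le\epsilon_{\mathrm{long}}(\alpha)$ under source-video replacement adjacency, with $K$ fixed by the public duration. The video-level $F_\theta$ is a kernel on this joint output and public metadata. This holds even though the inter-clip GRU mixes clips, because it only reads privatized $h_{k-1,\cdot}$. The same data-processing inequality therefore yields the result for $\hat V$.

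The main obstacle is not the inequality but verifying the hypothesis that $F$ receives no data-dependent input. In particular:
\begin{itemize}
\item the trained parameters depend on training data, not the protected clip, which is consistent with the exclusion of training-membership privacy;
\item $K$ and the padding mask must be treated as public, since otherwise $F$ would be a kernel indexed by private information;
\item no skip connection or conditioning path from raw frames reaches the decoder.
\end{itemize}
I would state these as explicit conditions, namely that $F$ is a fixed kernel independent of $X$ given public metadata, before invoking data processing.
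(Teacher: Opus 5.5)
Your proposal is correct and takes essentially the paper's route. The paper proves the proposition in one line from the data-processing inequality for R\'enyi divergence, applied to $F$ and, for multi-clip videos, to $F_\theta$ after the $K$ clip releases are composed. Your explicit check that $F$ is a fixed Markov kernel independent of $X$ given public metadata (frozen trained parameters, public $K$ and padding mask, no raw-frame conditioning paths) spells out the hypotheses the paper lists informally right after the proposition.
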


\begin{proof}
The result follows from the data-processing inequality for \Renyi divergence. Deterministic or randomized functions of DP outputs do not increase privacy loss.
\end{proof}

This proposition requires all codebooks, fusion modules, decoder parameters, and inference-time conditioning signals to be public or fixed independently of the protected video. In particular, no raw-frame or unprivatized-latent skips, optical-flow inputs, encoder attention maps, identity embeddings, or reference-image conditioning paths enter the decoder. Inter-clip memory, boundary states, blending, and padding removal use only DP representations, generated frames, and public metadata. For longer videos, these post-processing operations do not add to the composed loss, but a single-clip guarantee does not become a source-video guarantee.

\subsection{Training Objectives}\label{sec:training_loss}

The proposed framework jointly optimizes the trainable components of the framework. The adaptive-noise module is trained end-to-end: from previous privatized history $H_{i-1}$, it predicts only a nonnegative multiplier above the calibrated lower bound $s_{b,\min,i}$. The losses below shape this multiplier through reparameterized Gaussian sampling, while the privacy accountant still uses the fixed lower bounds rather than learned average noise. The overall objective is:
\begin{equation*}
\mathcal{L}=\lambda_{exp}\mathcal{L}_{exp} + \lambda_{rec}\mathcal{L}_{rec} + \lambda_{perc}\mathcal{L}_{perc} + \lambda_{temp}\mathcal{L}_{temp}.
\end{equation*}
Training may use multiple clips from a source video, but the inference-time privacy unit remains one $T$-frame clip, with additional composition for jointly released clips.

\noindent\textbf{Expression Loss.}
We preserve expression semantics using $\mathcal{L}_{exp}=\mathrm{CE}(f_{exp}(\hat{x}),y)$, where $f_{exp}$ is a fixed expression-recognition network.

\noindent\textbf{Reconstruction Loss.}
We use $\mathcal{L}_{rec}=\|\hat{x}-x\|_1$ to preserve low-level content. This does not contradict identity suppression because, at inference, any identity information reaching the decoder must pass through clipping and DP privatization. This training objective does not alter the inference-time privacy accountant; the released inference path must still satisfy the restrictions in \cref{prop:postprocessing}.

\noindent\textbf{Temporal Consistency Losses.}
Temporal regularization is applied at both the clip and video levels, while the released inference path still uses only privatized or post-processed representations. We decompose: $\mathcal{L}_{temp}=\lambda_{intra}\mathcal{L}_{intra}+\lambda_{inter}\mathcal{L}_{inter} +\lambda_{bd}\mathcal{L}_{bd}.$
The intra-clip term preserves local smoothness within each $T$-frame clip: $\mathcal{L}_{intra}=\sum_{k=1}^{K}\sum_{i=2}^{T}\sum_{b\in\{id,expr\}} \|z_{q,k,i}^{b}-z_{q,k,i-1}^{b}\|_2^2$.
The inter-clip term encourages continuity between the last privatized quantized representation of one clip and the first representation of the next: $\mathcal{L}_{inter} =\sum_{k=2}^{K}\sum_{b\in\{id,expr\}} \|z_{q,k,1}^{b}-z_{q,k-1,T}^{b}\|_2^2$.
Finally, the boundary term regularizes the generated video itself at clip transitions using a fixed perceptual feature extractor $\psi$: $\mathcal{L}_{bd}=\sum_{k=2}^{K} \|\psi(\hat{x}_{k,1})-\psi(\hat{x}_{k-1,T})\|_1$.
For a single isolated clip, $\mathcal{L}_{inter}$ and $\mathcal{L}_{bd}$ are omitted. These losses may use original frames as supervision during training, but at inference neither raw frames, raw optical flow, nor unclipped private features are provided to the decoder.

\noindent\textbf{Perceptual Loss.}
We use $\mathcal{L}_{perc}=\|\phi(\hat{x})-\phi(x)\|_2^2$, where $\phi$ is a fixed perceptual encoder. The loss is used during model training; at inference, $\phi(x)$ is not provided to the decoder.

Multi-clip batches are sampled as ordered consecutive clips from the same source video, with the GRU memory reset at the window start. Isolated or shuffled clips are treated as single-clip samples: the memory is reset to $m_0$, and $\mathcal{L}_{inter}$ and $\mathcal{L}_{bd}$ are omitted. The numerical loss weights and the exact training protocol are reported in \cref{sec:exp}.

\section{Experiment}\label{sec:exp}

\subsection{Experimental Settings}

The experimental protocol is organized around the datasets and splits, formal privacy-accounting configuration, implementation, comparison methods, and attacker threat models described below.

\subsubsection{Datasets}\label{sec:supp_dataset}
We conduct experiments on two facial-expression video datasets with identity ground truth, CREMA-D~\cite{cao2014crema} and RAVDESS~\cite{livingstone2012ryerson}. 
CREMA-D contains 7,442 clips from 91 actors with six facial-expression categories. We use 7,440 clips because one clip contains only two frames and another contains several concatenated clips. RAVDESS contains 1,440 clips from 24 actors with eight facial-expression categories. 
We use an actor-disjoint 75\%-25\% split into development and test sets. The 75\% development set is further divided into actor-disjoint training and calibration subsets using a 90\%-10\% split. The calibration subset is used only to select the clipping thresholds and other fixed privacy-mechanism parameters, while the test set remains untouched until final evaluation. The random seed is set as 42.

\subsubsection{Complete Privacy-Accounting Configuration}\label{sec:supp_accountant}
We evaluate inference-time clip-level replacement privacy for the joint release of identity and expression representations. This fixed-length unit provides a stable basis for privacy--utility comparison; it does not preclude a separately composed source-video guarantee. For each branch and frame, the accountant uses the clipping bound $\Delta_{b,i}\leq2C_b$ and the guaranteed effective-noise lower bound $s_{b,\min,i}$. The per-frame losses of both branches are added and then composed across the clip length $T$ according to \cref{thm:rdp_composition}. We convert the resulting RDP curve using \cref{eq:rdp_conversion}. A nominal base scale is not used as a substitute for the effective branch-specific bounds.

For full reproducibility, \cref{tab:supp_accountant_parameters} provides the complete calibration and accounting configuration for each dataset. Clipping thresholds and all quantities derived from them are calibrated independently on the actor-disjoint calibration split of each dataset. No test sample is used to select these parameters.

\begin{table*}[ht]
\centering
\caption{Complete dataset- and branch-specific privacy-accounting configuration.}
\resizebox{\textwidth}{!}{
\begin{tabular}{ll|c|c|c}
\toprule
Dataset & Parameter & Identity branch & Expression branch & Shared/clip-level \\
\midrule
\multirow{12}{*}{CREMA-D}
& Clipping percentile & 95th & 95th & -- \\
& Clipping norm & $C_{id}=0.539$ & $C_{expr}=0.844$ & -- \\
& Calibration clipping rate & $r_{id}^{cal}=0.05$ & $r_{expr}^{cal}=0.05$ & -- \\
& Test clipping rate & $r_{id}^{test}=0.0659$ & $r_{expr}^{test}=0.1617$ & -- \\
& Sensitivity bound & $2C_{id}=1.078$ & $2C_{expr}=1.688$ & -- \\
& Target-loss allocation & $\rho_{id}=0.4$ & $\rho_{expr}=0.6$ & $\rho_{id}+\rho_{expr}=1$ \\
& Guaranteed effective-noise lower bound & $s_{id,\min}=8.921$ & $s_{expr,\min}=11.406$ & -- \\
& Observed effective-noise range & $s_{id,i} \in [8.921,14.348]$ & $s_{expr,i} \in [11.406,13.024]$ & -- \\
& Soft-quantization temperature & $\tau_{id}=0.5$ & $\tau_{expr}=0.8$ & Excluded from accountant \\
& Evaluated RDP orders $\mathcal{A}$ & -- & -- & $\mathcal{A}=\{1.5,2,3,4,8,16,32\}$ \\
& Minimizing RDP order $\alpha^\star$ & -- & -- & 8 \\
& Achieved clip-level $\varepsilon$ & -- & -- & 3.9809 \\
\midrule
\multirow{12}{*}{RAVDESS}
& Clipping percentile & 95th & 95th & -- \\
& Clipping norm & $C_{id}=0.678$ & $C_{expr}=0.921$ & -- \\
& Calibration clipping rate & $r_{id}^{cal}=0.05$ & $r_{expr}^{cal}=0.05$ & -- \\
& Test clipping rate & $r_{id}^{test}=0.0417$ & $r_{expr}^{test}=0.1250$ & -- \\
& Sensitivity bound & $2C_{id}=1.356$ & $2C_{expr}=1.842$ & -- \\
& Target-loss allocation & $\rho_{id}=0.3$ & $\rho_{expr}=0.7$ & $\rho_{id}+\rho_{expr}=1$ \\
& Guaranteed effective-noise lower bound & $s_{id,\min}=17.088$ & $s_{expr,\min}=15.197$ & -- \\
& Observed effective-noise range & $s_{id,i} \in [17.088,25.107]$ & $s_{expr,i} \in [15.197,18.834]$ & -- \\
& Soft-quantization temperature & $\tau_{id}=0.5$ & $\tau_{expr}=0.8$ & Excluded from accountant \\
& Evaluated RDP orders $\mathcal{A}$ & -- & -- & $\mathcal{A}=\{1.5,2,3,4,8,16,32\}$ \\
& Minimizing RDP order $\alpha^\star$ & -- & -- & 8 \\
& Achieved clip-level $\varepsilon$ & -- & -- & 2.9880 \\
\midrule
\multicolumn{2}{l|}{Accounting clip length and stride} & -- & -- & $T=16$, stride $=16$ \\
\multicolumn{2}{l|}{Failure probability} & -- & -- & $\delta = 10^{-5}$ \\
\multicolumn{2}{l|}{Random seed} & -- & -- & 42 \\
\bottomrule
\end{tabular}}
\label{tab:supp_accountant_parameters}
\end{table*}
For both datasets, the clipping thresholds are selected at the 95th percentile on the actor-disjoint calibration split and then fixed before test evaluation. \Cref{tab:supp_accountant_parameters} reports the resulting clipping rates, target-loss allocations, guaranteed noise floors, observed scale ranges, and achieved clip-level privacy losses. The observed ranges are diagnostic only: the formal accountant uses the guaranteed lower bounds. Soft-quantization temperatures are likewise reported for reproducibility but are excluded from the privacy accountant.

\subsubsection{Source-video Privacy Composition}\label{sec:supp_video_composition}
For a source video producing $K$ jointly released clips, we compose the complete clip-level RDP curve before conversion:
\begin{align}
\epsilon_{\mathrm{video}}(\alpha)
&=\sum_{k=1}^{K}\epsilon_{\mathrm{clip},k}(\alpha),
\label{eq:supp_video_rdp_composition}\\
\varepsilon_{\mathrm{video}}(\delta)
&=\min_{\alpha\in\mathcal A}
\left[
\epsilon_{\mathrm{video}}(\alpha)
+\frac{\log(1/\delta)}{\alpha-1}
\right].
\label{eq:supp_video_dp_conversion}
\end{align}
We evaluate only $\delta=10^{-5}$; all reported conversions use the RDP orders $\mathcal A=\{1.5,2,3,4,8,16,32\}$. The conversion is performed once after RDP composition; consequently, $\varepsilon_{\mathrm{video}}$ is not obtained by simply multiplying the final per-clip $\varepsilon$ by $K$.

After deterministic preprocessing, CREMA-D source videos produce between 1 and 10 non-overlapping 16-frame clips, with a mean of 4.37 clips per source video. RAVDESS source videos produce between 4 and 7 clips, with a mean of 5.19. Because the number of clips must be an integer for an individual video, the mean describes only the dataset length distribution; it is not itself used as a privacy-composition count. We therefore report the observed minimum and maximum together with the nearest integer to the dataset mean as a representative example.

\begin{table}[t]
\centering
\caption{Dataset source-video clip-count statistics and conservative privacy guarantees obtained by composing non-overlapping 16-frame clip releases. ``Near mean'' is the integer clip count nearest to the dataset mean.}
\label{tab:video_privacy_composition}
\resizebox{0.9\linewidth}{!}{
\begin{tabular}{l|l|c|c|c}
\toprule
Dataset & Length statistic & $K$ & $\alpha^\star$ & $\varepsilon_{\mathrm{video}}$ \\
\midrule
\multirow{3}{*}{CREMA-D}
& Minimum & 1 & 8 & 3.981 \\
& Near mean (mean $=4.37$) & 4 & 4 & 8.510 \\
& Maximum & 10 & 3 & 14.517 \\
\midrule
\multirow{3}{*}{RAVDESS}
& Minimum & 4 & 4 & 6.524 \\
& Near mean (mean $=5.19$) & 5 & 4 & 7.196 \\
& Maximum & 7 & 4 & 8.539 \\
\bottomrule
\end{tabular}}
\end{table}

These values demonstrate that privacy loss increases with the number of jointly released clips, although RDP composition followed by a single conversion is tighter than naively multiplying the final per-clip $\varepsilon$. The formal guarantee reported in the primary comparison therefore applies only to one 16-frame release. Because source-video duration is public, the guarantee for an individual source video is selected using its actual integer clip count after deterministic preprocessing.

\subsubsection{Implementation Details}\label{sec:supp_implementation}
Input frames are normalized to $[0,1]$. We extract 1024-dimensional expression and identity features using an AffectNet-based encoder~\cite{mollahosseini2017affectnet} without its classification head and ArcFace~\cite{deng2019arcface}, respectively. The ID and expression codebooks are pre-trained on CelebA~\cite{liu2015faceattributes} and AffectNet, respectively. 
We set the accounting clip length to $T=16$ and treat source-video duration as public. Longer videos are divided into non-overlapping 16-frame clips, whereas shorter clips are padded by repeating the final frame. The same deterministic preprocessing is used for training and evaluation. After frame-wise soft quantization and branch fusion, a Temporal Transformer Encoder supplies intra-clip context, a boundary-aware GRU memory supplies inter-clip context, and a frame-level SDXL decoder, implemented as a 2D image decoder, generates $T$ frames for each privatized and quantized clip sequence. When the last clip contains repeated padding frames, the generated outputs at those padded positions are removed according to the public padding mask before the full video is released. The boundary-aware GRU is used only when multiple released clips from the same source video are assembled into a complete video.

\paragraph{Adaptive-scale module.}\label{supp_adaptive_scale}
The history projection maps the concatenated 2,048-dimensional identity--expression (1024 each) release to 256 dimensions. The history encoder is a single-layer GRU with hidden size 256. Two branch-specific scale-prediction MLPs are used for the identity and expression branches; their parameters are not shared. Each MLP has dimensions $256\rightarrow128\rightarrow1$ and uses a GELU activation between the two linear layers. The scalar output is converted to the effective noise scale by the Softplus parameterization: $s_{b,i}(H_{i-1})=s_{b,\min,i}\left[1+\text{softplus}\!\left(g_b(h_{i-1})\right)\right]$.

The initial recurrent state is a fixed zero vector. For both branch-specific MLPs, the weights of the final linear layer are initialized to zero and its bias is initialized to $-5$. Hence, at the start of training, $\text{softplus}(-5)\approx0.0067$ and the effective scale is close to, but never below, the calibrated lower bound. The remaining trainable weights use Xavier initialization.
The adaptive-scale module is trained jointly with the fusion and temporal post-processing modules through reparameterized Gaussian sampling. Gradients from the expression, reconstruction, perceptual, and temporal objectives therefore shape the learned scale schedule, while the hard parameterization preserves the calibrated lower bound. No additional regularization term is applied directly to the predicted noise scale.
After training, all adaptive-scale parameters are fixed before protected inference. At inference, the module receives exclusively preceding privatized identity and expression representations. 
% The privacy accountant always uses $s_{b,\min,i}$ and does not rely on the learned mean or observed scale distribution. 
Realized scale statistics are reported in~\cref{tab:noise_statistics}.

We optimize the framework using AdamW~\cite{loshchilov2017decoupled} with a cosine-decay learning-rate schedule. The initial learning rate is \(0.001\), the batch size is 16, and training lasts for 100 epochs. Each codebook contains 1,000 entries of dimension 1,024. We set $\lambda_{exp}=\lambda_{rec}=\lambda_{perc}=\lambda_{temp}=\lambda_{intra}=\lambda_{inter}=\lambda_{bd}=0.5$. The training objective does not include an additional codebook loss. Training takes approximately 24 hours on four NVIDIA H100 GPUs.

After training, we perform inference on both the training and test splits to generate privacy-preserved videos. 
%The inferred results are used to validate the facial expression features of the generated videos.
The generated videos are used for downstream FER evaluation.
For each anonymization method, R3D, TimeSformer, and VideoMAE V2 are separately trained on the corresponding privacy-preserved training videos and evaluated on the corresponding privacy-preserved test videos.
FER is evaluated on each complete source video rather than by copying its label to every 16-frame clip. This preserves the original task definition because the onset, apex, and offset of an expression may occur in different clips. By contrast, cPLR operates on individual 16-frame releases and measures clip-level empirical identity leakage. The formal guarantee for a complete video release is obtained by composing the RDP curves of its jointly released clips as described in~\cref{sec:supp_video_composition}.

\subsubsection{Baseline Comparison}\label{sec:supp_baselines}
We compare our method with six privacy-preserving baselines: the adversarial approach SPAct~\cite{dave2022spact}, the controlled privacy-preservation method of~\cite{Xu_2025_WACV} (``Controlled''), the face-swapping method MobileFaceSwap~\cite{xu2022mobilefaceswap} (``Swapping''), the image-based method MinusFace~\cite{mi2024minusface}, Knowledge Priors for identity-disentangled open-set video FER~\cite{xu2026knowledge}, and Pi-Net, which provides perceptual indistinguishability~\cite{chen2021perceptual}. For a fair utility comparison, we separately train R3D, TimeSformer, and VideoMAE V2 on the privacy-preserved training videos produced by each method and evaluate them on the corresponding privacy-preserved test videos.

\subsubsection{Evaluation Metrics}\label{sec:supp_evaluation}
Our framework is evaluated on three aspects: utility task performance, privacy preservation effectiveness, and temporal consistency. We compare our method against baselines on the CREMA-D and RAVDESS datasets.

For evaluating temporal consistency, we report inter-frame flow-guided warping error (FWE), Fréchet video distance (FVD) using I3D~\cite{carreira2017quo} to extract features, temporal LPIPS (tLPIPS) as the average LPIPS distance between consecutive generated frames, and temporal identity consistency (T-ID) using ArcFace to extract features. 
For utility, we employ facial expression recognition (FER) accuracy as the main metric.

\subsubsection{Closed-set Identity Leakage Attacks} \label{sec:supp_identity_attacks}

We evaluate direct identity leakage at both frame and clip levels. For this empirical attack evaluation, the held-out test actors remain disjoint from the actors used to train TADP and the anonymization baselines. The videos of each held-out actor are further divided into an attacker-enrollment subset and a disjoint probe subset. Thus, the anonymization models never use the held-out actors during training, whereas the closed-set attacker is granted labeled enrollment samples for every probe identity.

Frame-level PLR follows~\cite{Xu_2025_WACV}. A ResNet50 identity attacker is trained using the privacy-preserved enrollment frames and evaluated on randomly sampled frames from the corresponding privacy-preserved probe videos. PLR is its top-1 identity recognition accuracy.

To evaluate identity leakage at the same granularity as the formal privacy unit, we additionally introduce clip privacy leakage ratio (cPLR). The cPLR attacker uses InternVideo2~\cite{wang2024internvideo2} to encode complete non-overlapping 16-frame clips. For each anonymization method and dataset, a separate InternVideo2 attacker is trained on the privacy-preserved attacker-enrollment clips produced by that method and evaluated on its disjoint privacy-preserved probe clips. Consequently, the attacker is aware of the output distribution of the evaluated anonymization mechanism. cPLR is defined as the resulting closed-set top-1 identity-recognition accuracy. Lower PLR and cPLR indicate less residual identity information at the frame and clip levels, respectively.
Because cPLR evaluates an individual 16-frame release, it should not be interpreted as source-video-level leakage after aggregating multiple released clips. An attacker that aggregates all clips of a source video may be stronger and should be evaluated together with the composed source-video privacy budget.

\subsubsection{Open-set Identity Identification and Verification}\label{sec:supp_open_set_identity_attack}
The closed-set PLR and cPLR evaluations assume that every probe identity is registered by the attacker. To relax this assumption, we adapt a separate AdaFace-based identity matcher~\cite{kim2022adaface} for each anonymization method using only that method's privacy-preserved outputs from the development actors. The architecture, optimization schedule, and training budget are identical across methods, and all actors in the final test split remain unseen during attacker adaptation.

For open-set verification, embeddings from the original enrollment clips of each enrolled test actor are averaged and $\ell_2$-normalized to form a template. Privacy-preserved or RVRT-recovered 16-frame clips serve as probes, and their representations are obtained by averaging normalized frame embeddings. Identity similarity is measured by cosine similarity. Thresholds are selected exclusively from genuine and impostor pairs among development actors and fixed before test evaluation. We report true accept rate (TAR) at false accept rates (FAR) of $10^{-2}$ and $10^{-3}$ and equal error rate (EER); lower TAR and higher EER indicate stronger identity privacy.

For open-set identification, the test actors are divided into enrolled and unknown identities. The gallery contains one template for each enrolled identity, while probes include both enrolled and unknown actors. A probe is identified only when its maximum gallery similarity exceeds a threshold selected using the development actors. We report detection and identification rate (DIR) at FAR values of $10^{-2}$ and $10^{-3}$. To reduce sensitivity to the limited number of test actors, we repeat the enrolled/unknown partition using five random seeds and report mean$\pm$standard deviation.

\subsection{Results}

\noindent\textbf{Utility vs. Privacy.}
\Cref{tab:FER_PLR} shows that TADP offers a strong empirical privacy--utility trade-off among the evaluated baselines across datasets and FER backbones, combining low PLR and cPLR with high FER accuracy; FER-only is the non-private upper bound.
The reported noise scale is a per-coordinate standard deviation, so the perturbation norm can greatly exceed the clipped-feature norm. Nevertheless, input interventions and repeated releases show that full-video FER depends on the correctly aligned sequence of clip-specific privatized evidence and remains stable across independent mechanism randomness, rather than being explained solely by decoder/classifier priors or a favorable noise realization. Stage-wise probes and soft-quantization diagnostics are provided in \cref{stage_wise}.

\begin{table*}[!t]
    \centering
    \caption{Full-video FER under $K$-specific composed budgets, frame-level PLR, and 16-frame cPLR (\%); see~\cref{sec:supp_video_composition}.}
    \resizebox{0.95\linewidth}{!}{
    \begin{tabular}{c|c|c|c|c|c|c|c|c|c|c}
    \toprule
         \multirow{2}{*}{Method}& \multicolumn{2} {c|}{Full-video} R3D FER Acc.$\uparrow$ & \multicolumn{2} {c|}{Full-video TimeSformer FER Acc.$\uparrow$}& \multicolumn{2} {c|}{Full-video VideoMAE V2 FER Acc.$\uparrow$}& \multicolumn{2} {c|}{Frame-level PLR$\downarrow$} & \multicolumn{2} {c}{16-frame cPLR$\downarrow$}\\ \cline{2-11}
         & CREMA-D& RAVDESS&CREMA-D& RAVDESS&CREMA-D & RAVDESS&CREMA-D &RAVDESS &CREMA-D &RAVDESS\\
    \midrule
        FER-only                   & 87.043 &88.675 &92.683& 93.032& 90.860& 91.944& N/A& N/A  & N/A&  N/A \\ 
    \midrule
        SPAct~\cite{dave2022spact} & 70.963& 72.548&73.576&74.815 & 73.978& 74.722 & 32.573& 35.921 &40.322& 41.111\\
        Controlled~\cite{Xu_2025_WACV} &77.955& 79.551& 80.153&80.958& 81.021& 81.944 &2.088 & 4.983  &2.220&5.833\\
        Swapping~\cite{xu2022mobilefaceswap} & 65.194& 65.918 & 68.985 & 69.112 & 70.054& 71.111& 4.023& 4.856 &4.194&5.556 \\
        MinusFace~\cite{mi2024minusface} & 76.562 & 78.618 & 82.586& 82.956 & 83.871& 83.611& 6.569 & 8.874 &6.935& 9.166 \\
        K.P.~\cite{xu2026knowledge} & 77.424&79.013 & 81.349 & 82.431& 82.589& 83.215& 2.591& 5.344& 2.541& 5.968 \\
        Pi-Net~\cite{chen2021perceptual} &73.032 & 75.722&78.796 & 80.333&79.065 &80.167 & 1.290& \textbf{3.611} & 1.505 & 4.722\\
        \textbf{TADP} (ours) & \textbf{83.871} & \textbf{82.278}&\textbf{87.634} & \textbf{89.167} & \textbf{88.011}& \textbf{88.889}& \textbf{1.237}& 3.889 &\textbf{1.452} & \textbf{4.444} \\
    \bottomrule
    \end{tabular}
    }

    \label{tab:FER_PLR}
\end{table*}

\noindent\textbf{Temporal Consistency.}
Table~\ref{res:cremad_ravdess_temporal_consistency} compares temporal consistency. TADP achieves the best overall performance on both datasets, with lower flow-guided warping error, lower FVD, lower temporal LPIPS, and higher temporal identity consistency than the competing methods. Overall, TADP produces smoother and more coherent anonymized videos while retaining expression dynamics. A qualitative continuous-sequence visualization is provided in \cref{sec:supp_video_temporal_consistency}.

% \david{Add ablations removing the intra-clip Transformer and inter-clip GRU separately. The present ablation table does not isolate the temporal modules despite temporal coherence being a central claim.}

\begin{table}[!t]
    \centering
    \caption{Temporal consistency on CREMA-D and RAVDESS.}
    \resizebox{0.8\linewidth}{!}{
    \begin{tabular}{c|c|c|c|c|c}
    \toprule
        
    Dataset& Method& FWE $\downarrow$ & FVD $\downarrow$ & tLPIPS $\downarrow$ &  T-ID$\uparrow$ \\

    \midrule
        \multirow{7}{*}{CREMA-D} &SPAct~\cite{dave2022spact} &0.1023 &309.5686 & 0.0974 & 0.1032 \\
        &Controlled~\cite{Xu_2025_WACV}                      &0.1285 &485.5864 & 0.1459 & 0.0481  \\
        &Swapping~\cite{xu2022mobilefaceswap}                &0.0345 &295.5713 & 0.0348 & 0.7691  \\
        &MinusFace~\cite{mi2024minusface}                    &0.0586 &395.5390 & 0.1586 & 0.0287 \\
        &K.P.~\cite{xu2026knowledge}                         & 0.0923&408.3363 & 0.1028 & 0.1785 \\
        &Pi-Net~\cite{chen2021perceptual}                    &0.0358 &292.4851 & 0.0557 & 0.7802 \\
        &\textbf{TADP} (ours)&\textbf{0.0214}&\textbf{284.8561} &\textbf{0.0317} &\textbf{0.7937}  \\ 
        \midrule
        \multirow{7}{*}{RAVDESS} &SPAct~\cite{dave2022spact} &0.0945 &345.5581 & 0.0910  &0.2301 \\
        &Controlled~\cite{Xu_2025_WACV}                      &0.1001 &501.1388 & 0.1368  &0.0732 \\
        &Swapping~\cite{xu2022mobilefaceswap}                &0.0443 &261.5947 & 0.0429  &0.7899 \\
        &MinusFace~\cite{mi2024minusface}                    &0.0434 &419.8853 & 0.1524  &0.0415 \\
        &K.P.~\cite{xu2026knowledge}                         &0.1068 &439.3179 & 0.1145  &0.1067 \\
        &Pi-Net~\cite{chen2021perceptual}                    &0.0284 &251.0348 & 0.0652  &0.7920 \\
        & \textbf{TADP} (ours) & \textbf{0.0203} & \textbf{249.5107} &\textbf{0.0414} &\textbf{0.8032}\\ 

    \bottomrule
    \end{tabular}}
\label{res:cremad_ravdess_temporal_consistency}
\end{table}

The quantitative comparison is complemented by the continuous-sequence visualization below.
\subsubsection{Qualitative Temporal Consistency}\label{sec:supp_video_temporal_consistency}

\cref{fig:supp_video_temporal_consistency} provides a frame-by-frame visualization of a privacy-preserved video generated by TADP. The frames are sampled from a continuous generated sequence and arranged in temporal order from left to right and top to bottom. Across the sequence, the synthesized face preserves a stable anonymized appearance while the mouth, eyes, head pose, and expression-related motion evolve smoothly. This qualitative result complements the quantitative temporal-consistency results above, showing that the proposed temporal transformer and boundary-aware memory help reduce frame-level flickering and maintain coherent expression dynamics throughout the released video.

\begin{figure*}[t]
\centering
\setlength{\tabcolsep}{1pt}
\renewcommand{\arraystretch}{0}
\begin{tabular}{@{}cccccccc@{}}
\includegraphics[width=0.121\textwidth]{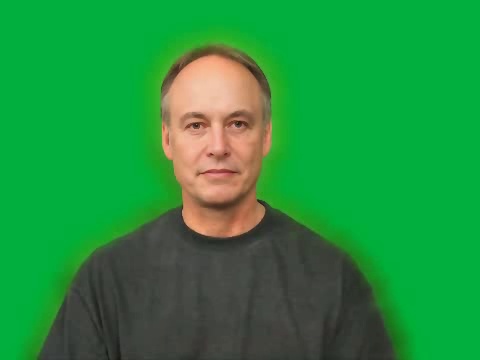} &
\includegraphics[width=0.121\textwidth]{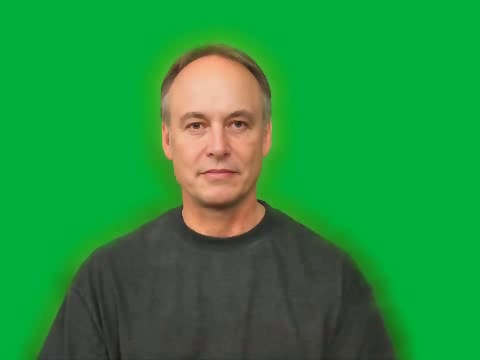} &
\includegraphics[width=0.121\textwidth]{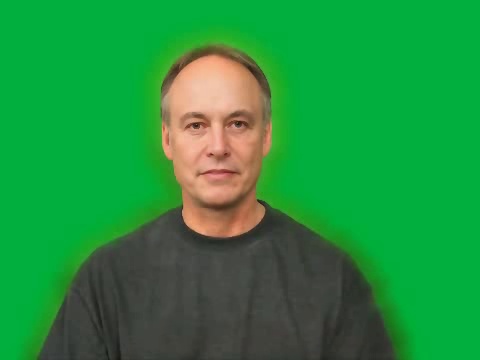} &
\includegraphics[width=0.121\textwidth]{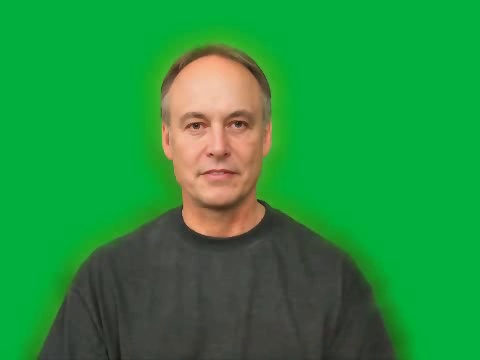} &
\includegraphics[width=0.121\textwidth]{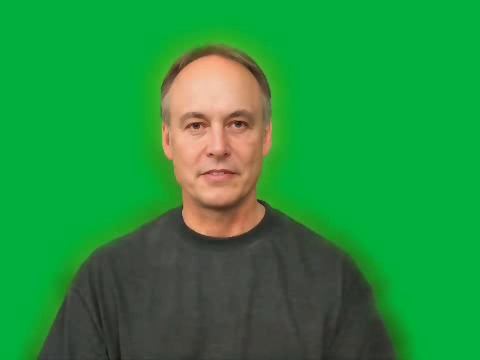} &
\includegraphics[width=0.121\textwidth]{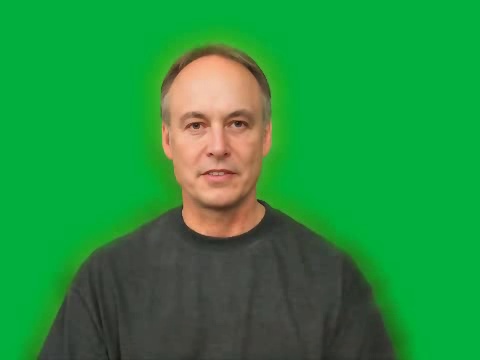} &
\includegraphics[width=0.121\textwidth]{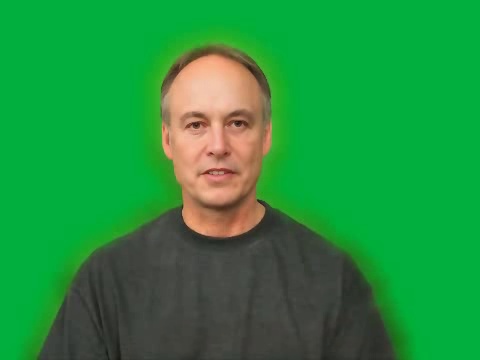} &
\includegraphics[width=0.121\textwidth]{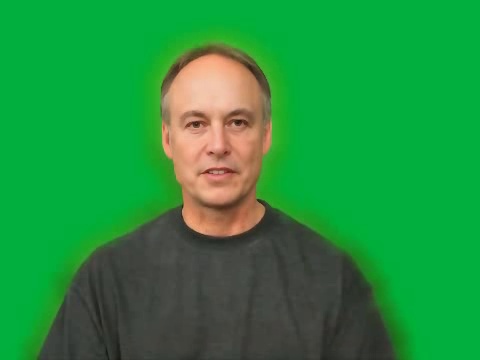} \\
\includegraphics[width=0.121\textwidth]{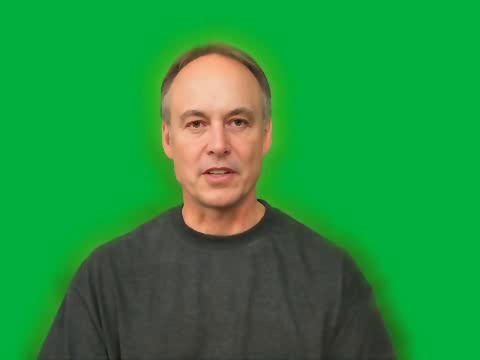} &
\includegraphics[width=0.121\textwidth]{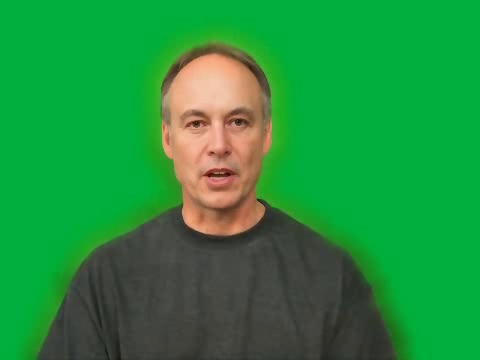} &
\includegraphics[width=0.121\textwidth]{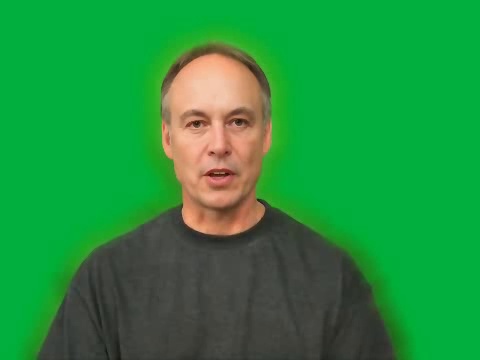} &
\includegraphics[width=0.121\textwidth]{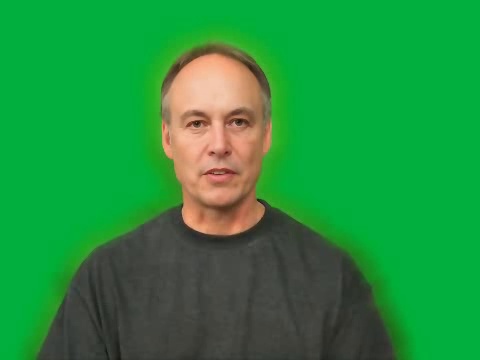} &
\includegraphics[width=0.121\textwidth]{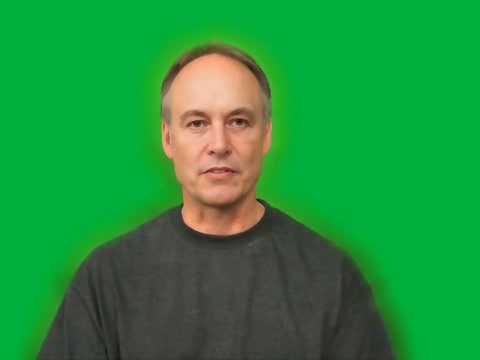} &
\includegraphics[width=0.121\textwidth]{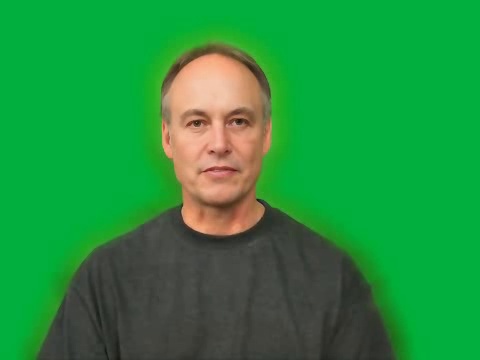} &
\includegraphics[width=0.121\textwidth]{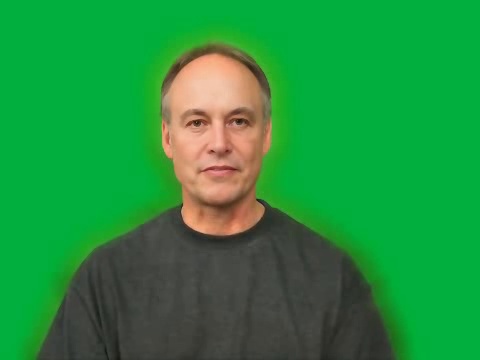} &
\includegraphics[width=0.121\textwidth]{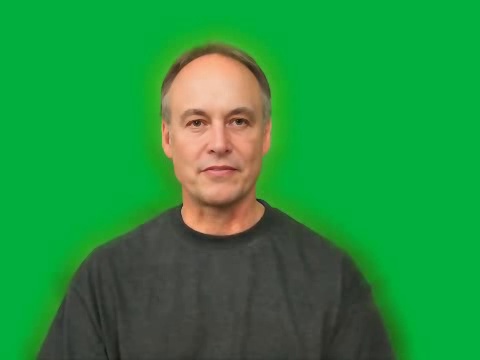} \\
\includegraphics[width=0.121\textwidth]{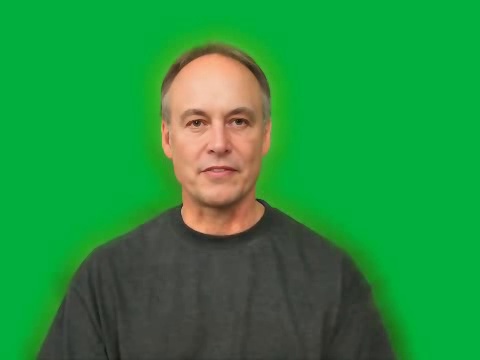} &
\includegraphics[width=0.121\textwidth]{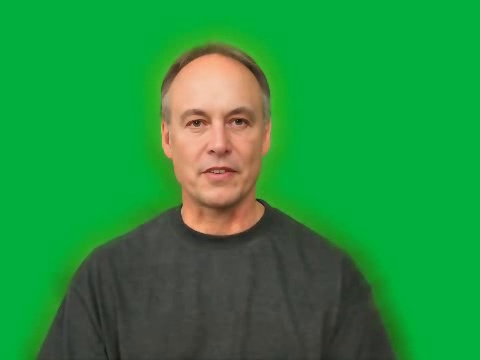} &
\includegraphics[width=0.121\textwidth]{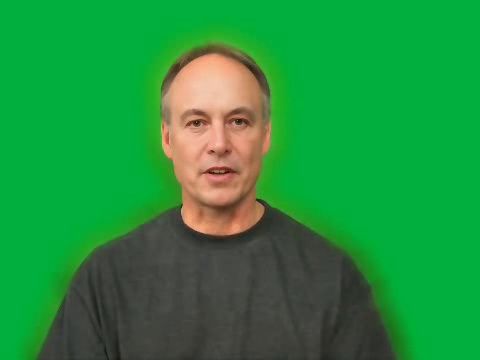} &
\includegraphics[width=0.121\textwidth]{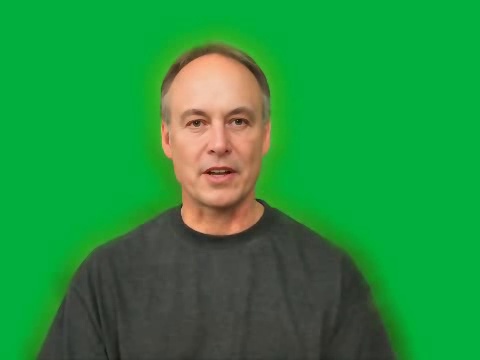} &
\includegraphics[width=0.121\textwidth]{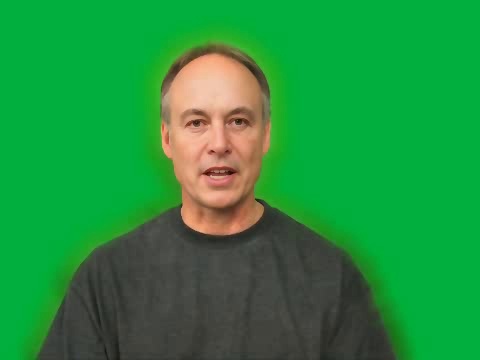} &
\includegraphics[width=0.121\textwidth]{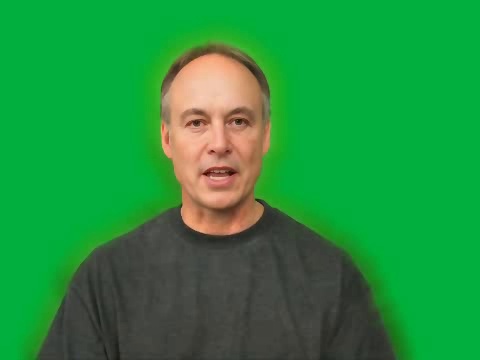} &
\includegraphics[width=0.121\textwidth]{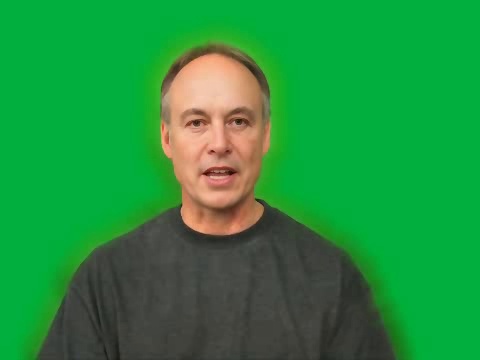} &
\includegraphics[width=0.121\textwidth]{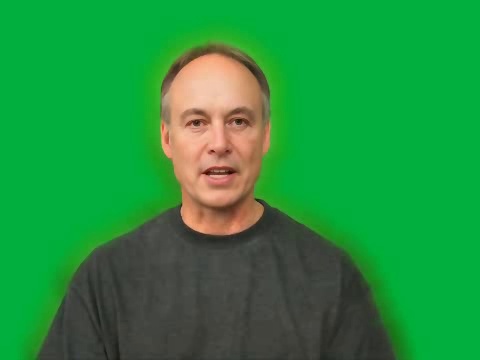} \\
\includegraphics[width=0.121\textwidth]{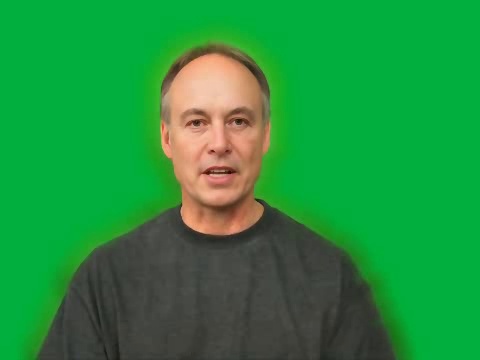} &
\includegraphics[width=0.121\textwidth]{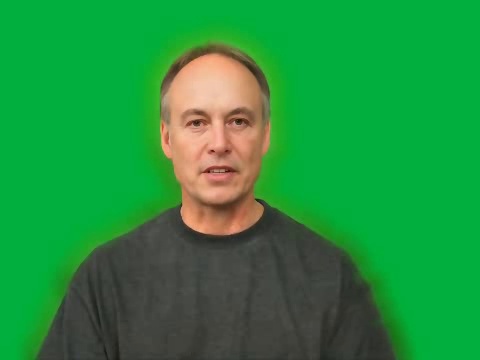} &
\includegraphics[width=0.121\textwidth]{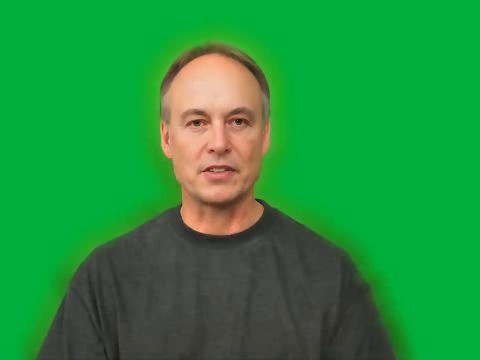} &
\includegraphics[width=0.121\textwidth]{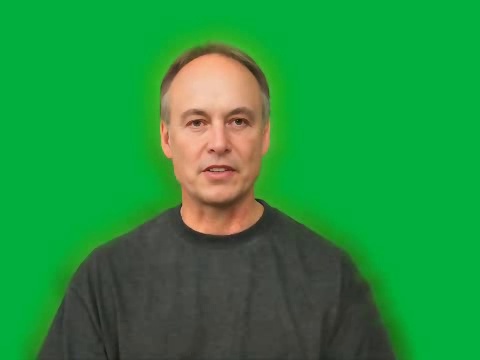} &
\includegraphics[width=0.121\textwidth]{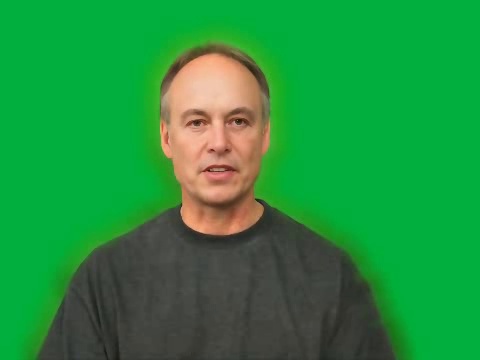} &
\includegraphics[width=0.121\textwidth]{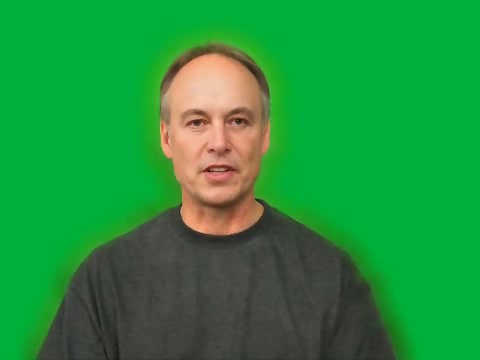} &
\includegraphics[width=0.121\textwidth]{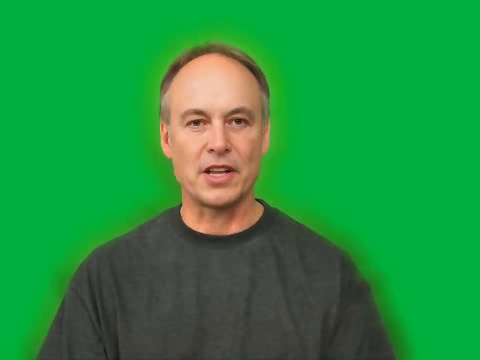} &
\includegraphics[width=0.121\textwidth]{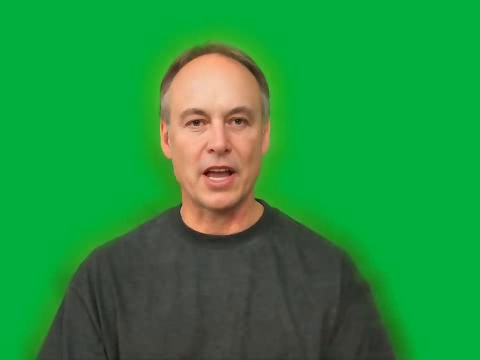} \\
\includegraphics[width=0.121\textwidth]{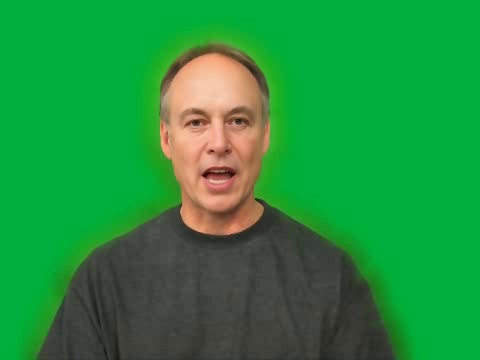} &
\includegraphics[width=0.121\textwidth]{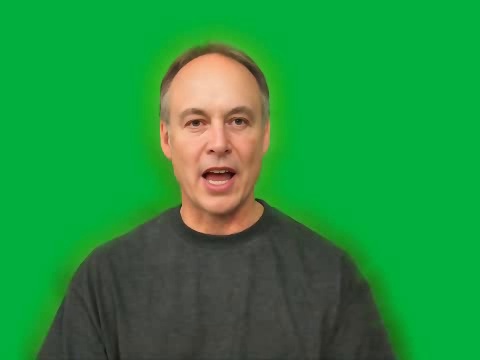} &
\includegraphics[width=0.121\textwidth]{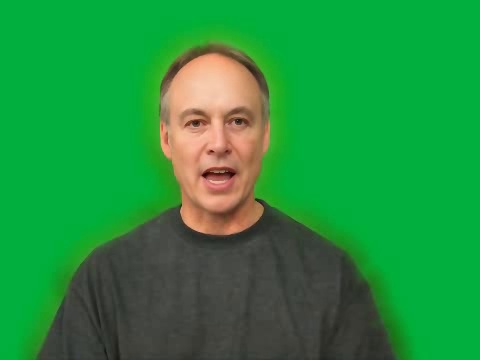} &
\includegraphics[width=0.121\textwidth]{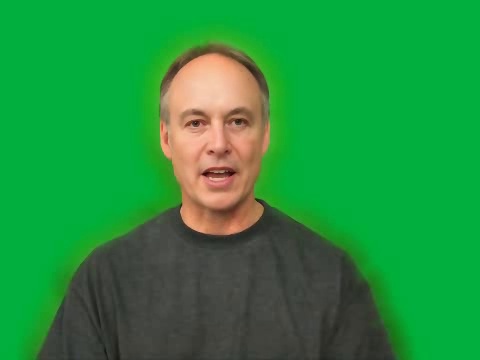} &
\includegraphics[width=0.121\textwidth]{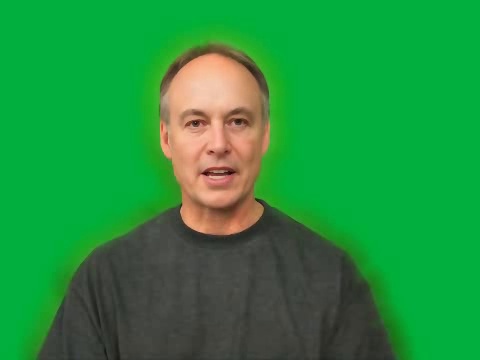} &
\includegraphics[width=0.121\textwidth]{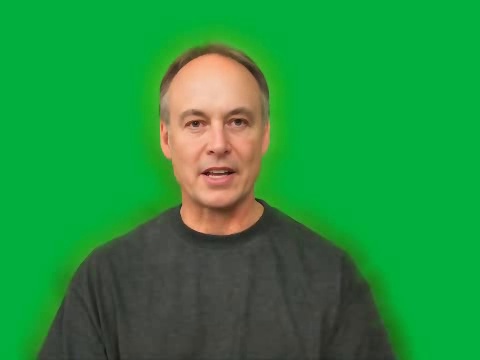} &
\includegraphics[width=0.121\textwidth]{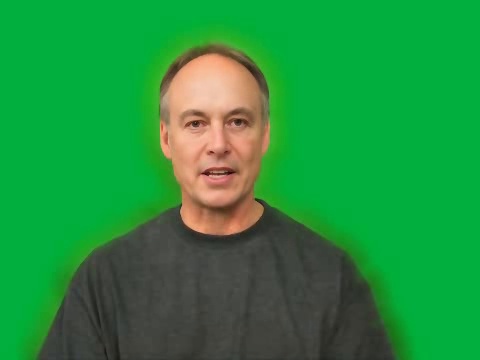} &
\includegraphics[width=0.121\textwidth]{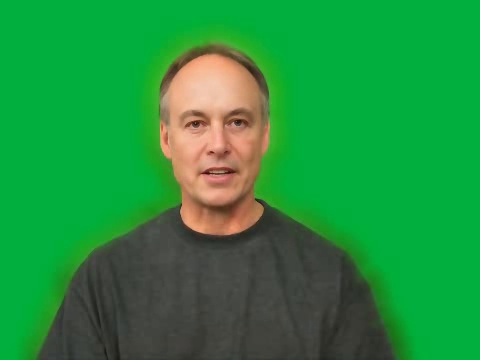} \\
\includegraphics[width=0.121\textwidth]{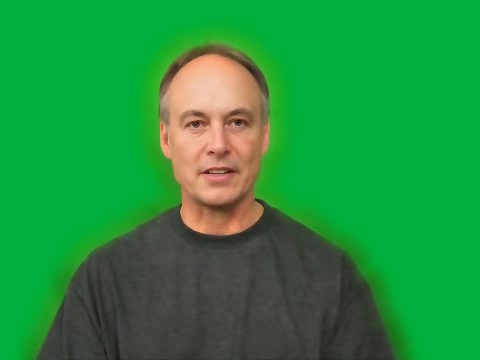} &
\includegraphics[width=0.121\textwidth]{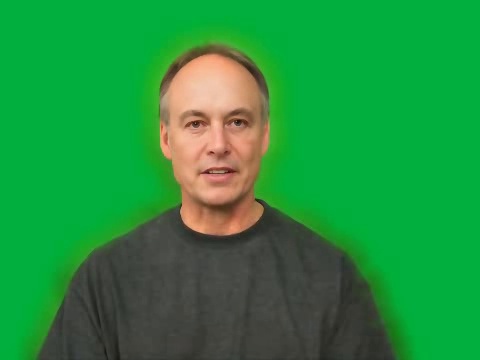} &
\includegraphics[width=0.121\textwidth]{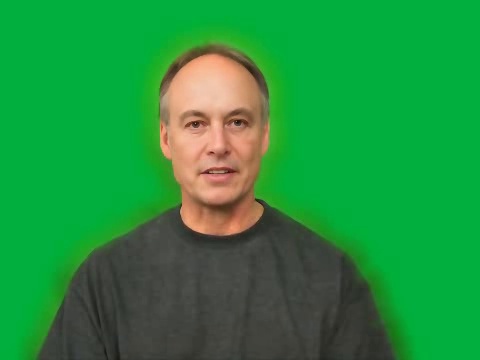} &
\includegraphics[width=0.121\textwidth]{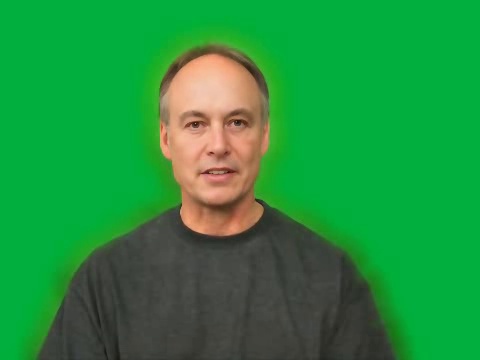} &
\includegraphics[width=0.121\textwidth]{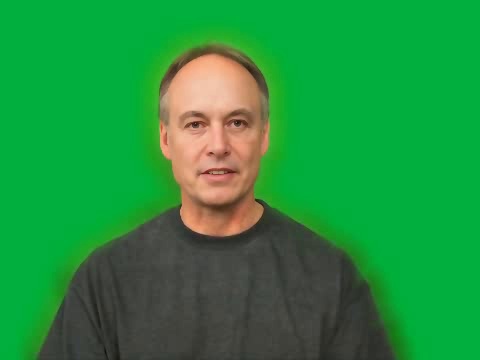} &
\includegraphics[width=0.121\textwidth]{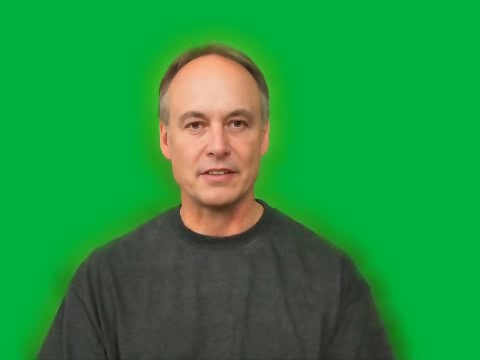} &
\includegraphics[width=0.121\textwidth]{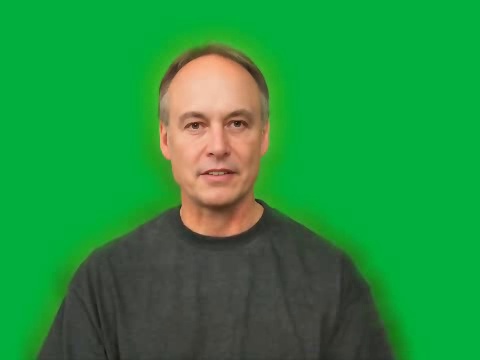} &
\includegraphics[width=0.121\textwidth]{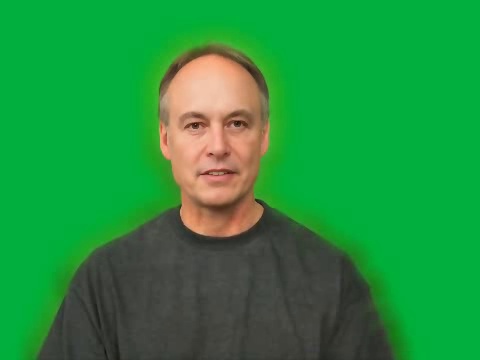} \\
\includegraphics[width=0.121\textwidth]{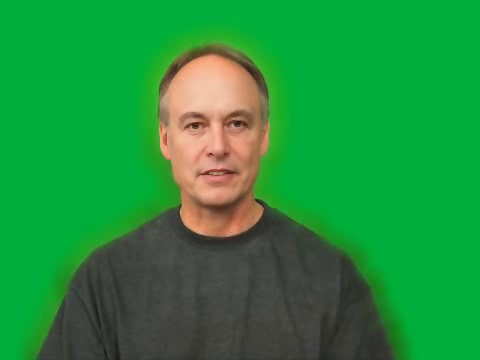} &
\includegraphics[width=0.121\textwidth]{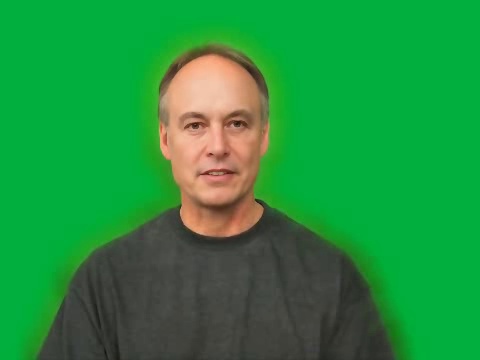} &
\includegraphics[width=0.121\textwidth]{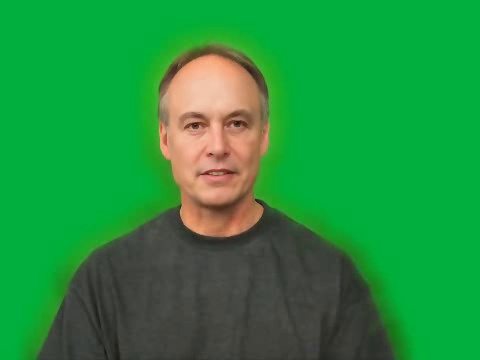} &
\includegraphics[width=0.121\textwidth]{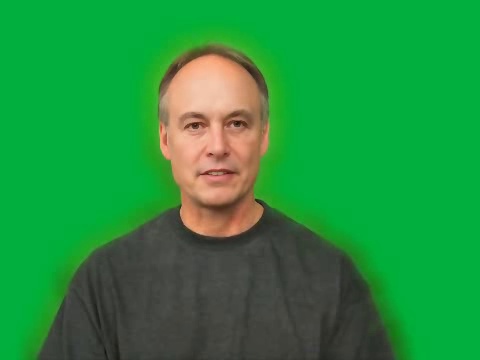} &
\includegraphics[width=0.121\textwidth]{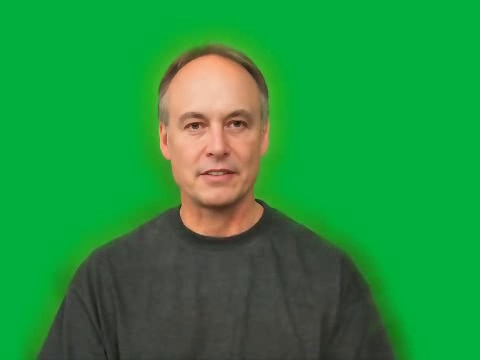} &
\includegraphics[width=0.121\textwidth]{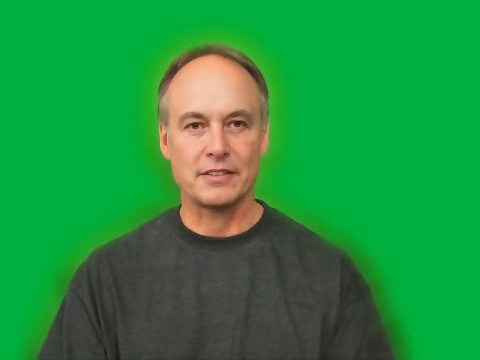} &
\includegraphics[width=0.121\textwidth]{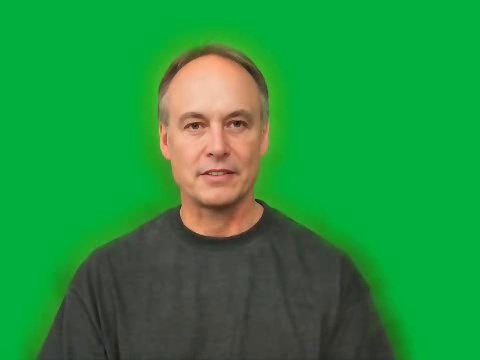} &
\includegraphics[width=0.121\textwidth]{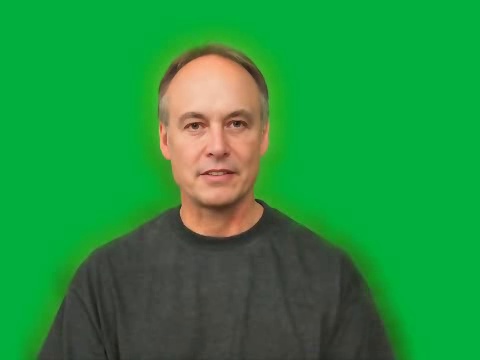} \\
\includegraphics[width=0.121\textwidth]{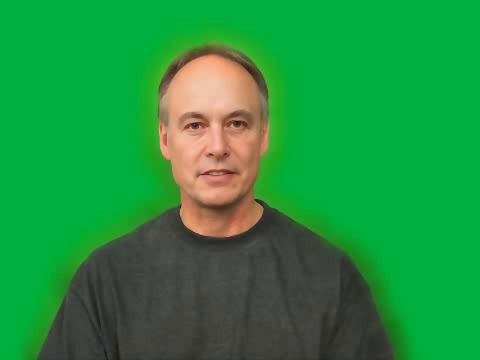} &
\includegraphics[width=0.121\textwidth]{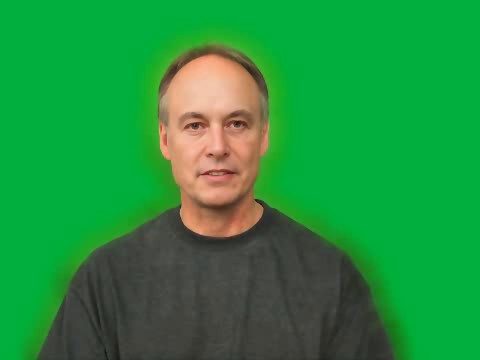} &
\includegraphics[width=0.121\textwidth]{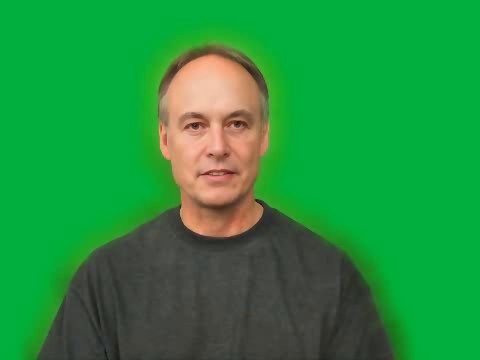} &
\includegraphics[width=0.121\textwidth]{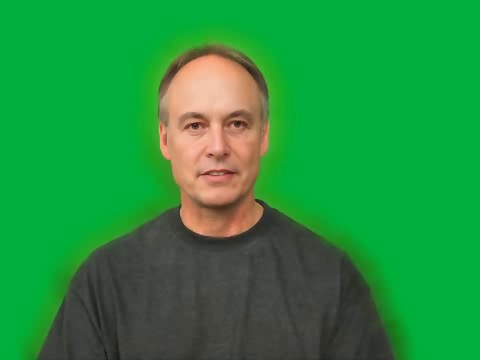} &
\includegraphics[width=0.121\textwidth]{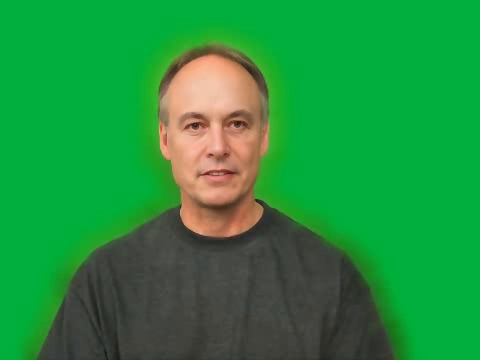} &
\includegraphics[width=0.121\textwidth]{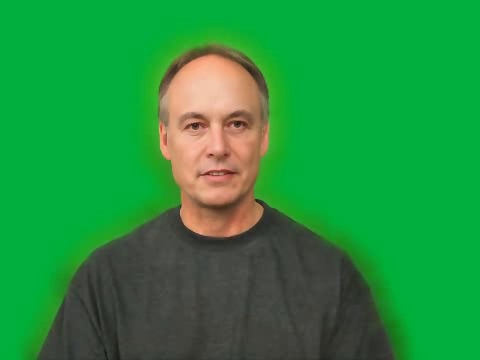} &
\includegraphics[width=0.121\textwidth]{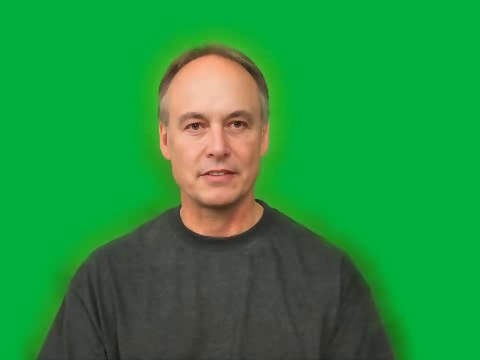} &
\includegraphics[width=0.121\textwidth]{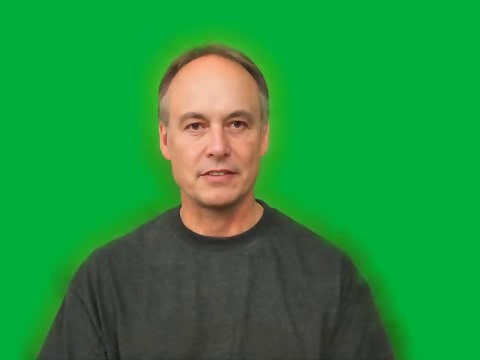}
\end{tabular}
\caption{Video temporal consistency visualization. The 64 generated frames are arranged in temporal order from left to right and top to bottom, showing that the privacy-preserved video maintains smooth frame-to-frame appearance and expression dynamics.}
\label{fig:supp_video_temporal_consistency}
\end{figure*}

% \david{Table~5 and Fig.~3 remain weaker than the main experimental story. FID/MMD in ArcFace space and t-SNE separation are descriptive evidence, not direct proof of anonymity. Consider moving them to the supplementary material and using the recovered main-paper space for the requested qualitative comparison or a privacy--utility plot.}

% \david{If Fig.~3 is retained, soften ``confirm'' to ``is consistent with'' or ``supports''. The result shows that features remain separated under one ArcFace representation and the evaluated recovery attack, not that identity cannot be recovered in general.}

\noindent\textbf{Recovery Attack.}
We evaluate robustness against anonymization-aware, paired-supervision recovery attacks. In addition to the frame-based U-Net attacker~\cite{ronneberger2015u}, we employ RVRT~\cite{liang2022recurrent} as a stronger video recovery attacker that jointly exploits spatial and temporal information from each 16-frame clip. For each anonymization method, both attackers are separately trained to reconstruct the corresponding original clips from that method's privacy-preserved outputs. 
We measure visual recovery using SSIM and PSNR and evaluate identity leakage after recovery using PLR and cPLR. Lower values indicate stronger resistance to the evaluated recovery attacks. The results are shown in Table~\ref{tab:privacy_leakage}. TADP achieves competitive resistance across both datasets.

\begin{table}[!ht]
    \centering
    \caption{After U-Net and RVRT recovery attacks, evaluating on four metrics.}
    \resizebox{0.99\linewidth}{!}{
    \begin{tabular}{c|c|c|c|c|c|c|c|c|c}
    \toprule
        &\multirow{2}{*}{Method}& \multicolumn{4} {c|}{CREMA-D}& \multicolumn{4} {c}{RAVDESS}\\ \cline{3-10}
        & & SSIM $\downarrow$& PSNR$\downarrow$&PLR $\downarrow$&cPLR $\downarrow$&SSIM$\downarrow$& PSNR$\downarrow$&PLR$\downarrow$&cPLR $\downarrow$ \\
    \midrule
    \multirow{7}{*}{U-Net}    
        &SPAct~\cite{dave2022spact}           & 0.5358& 23.1459& 42.890\%& 54.913\%& 0.5421 &23.0751&43.861\% & 56.497\% \\
        &Controlled~\cite{Xu_2025_WACV}       & 0.4283& 11.8560& 5.847\% & 10.573\%& 0.4641 &11.9177& 6.600\% & 11.556\% \\
        &Swapping~\cite{xu2022mobilefaceswap} & 0.5012& 17.9647& 38.950\%& 42.321\%& 0.5186 &18.0398&39.315\% & 44.789\% \\
        &MinusFace~\cite{mi2024minusface}     & 0.4591& 13.5457& 97.395\%& 98.667\%& 0.4461 &13.5287&97.477\% & 98.991\%\\
        &K.P.~\cite{xu2026knowledge}          & 0.3959& 11.8308& 6.103\%& 9.425\%& 0.4512& 10.314 &6.813\%&10.543\% \\
        &Pi-Net~\cite{chen2021perceptual}     & 0.3901& 11.0414& \textbf{1.942\%}& 2.167\% & 0.4118 & \textbf{9.7944} & 3.889\% & 4.694\%\\
        &\textbf{TADP} (ours)                 &\textbf{0.3754} & \textbf{10.8593}&1.989\% &\textbf{2.012\%} & \textbf{0.4002} &9.8573&\textbf{3.611\%} & \textbf{3.976\%}\\
    \midrule
    \multirow{7}{*}{RVRT} 
        &SPAct~\cite{dave2022spact}           & 0.6948& 29.6454 & 50.587\%& 54.941\%& 0.7034& 30.0396& 51.581\% & 57.389\% \\
        &Controlled~\cite{Xu_2025_WACV}       & 0.5247& 14.4512 & 8.935\%& 10.243\%& 0.5889& 15.5261& 9.628\% & 11.094\% \\
        &Swapping~\cite{xu2022mobilefaceswap} & 0.5831& 21.6778 & 42.601\%& 56.325\%& 0.5991& 26.1709& 41.107\% & 52.139\% \\
        &MinusFace~\cite{mi2024minusface}     & 0.5190& 18.4587 & 98.326\%& 99.417\%& 0.5326& 20.1276& 98.754\% & 99.124\%\\
        &K.P.~\cite{xu2026knowledge}          & 0.5421& 14.0121 & 7.347\%& 9.439\%& 0.5017& 11.5491& 7.046\%& 10.435\% \\
        &Pi-Net~\cite{chen2021perceptual}     & 0.4003& 11.0985 & 2.425\%& 3.146\%& 0.4545& 13.6399& \textbf{3.261}\% & 4.847\%\\
        &\textbf{TADP} (ours)                 &\textbf{0.3996} & \textbf{11.0348}&\textbf{2.106\%} &\textbf{2.912\%} & \textbf{0.4353} & \textbf{13.3537}&3.742\% & \textbf{3.498\%}\\
    \bottomrule
    \end{tabular}}
    \label{tab:privacy_leakage}

\end{table}

\noindent\textbf{Relation to Formal Privacy.}
PLR, cPLR, and recovery attacks measure operational identity leakage, whereas the reported $(\varepsilon,\delta)$ guarantee bounds pairwise output distinguishability under 16-frame clip-replacement adjacency. These quantities are not directly interchangeable, and low empirical leakage does not estimate or validate the accounted $\varepsilon$. \Cref{sec:supp_dp_audit} formalizes the corresponding hypothesis-testing audit and its interpretation.

\noindent\textbf{Open-set Identity Leakage.}
The closed-set and recovery results are complemented by open-set identification and verification under attackers adapted only on development actors.
\begin{table*}[t]
    \centering
    \caption{Open-set identity verification and identification under anonymization-aware attackers. For each anonymization method, the identity attacker is separately adapted using only privacy-preserved outputs from the development actors. All evaluation actors are unseen during attacker adaptation. ``Direct'' denotes privacy-preserved probes, and ``Recovered'' denotes probes reconstructed by the method-specific RVRT attacker. Lower TAR and DIR and higher EER indicate stronger identity privacy.}
    \label{tab:open_set_identity}
    \resizebox{\textwidth}{!}{
        \begin{tabular}{ll|ccc|cc|ccc|cc}
            \toprule
            \multirow{2}{*}{Method} & \multirow{2}{*}{Probe} & \multicolumn{5}{c|}{CREMA-D} & \multicolumn{5}{c}{RAVDESS} \\ \cline{3-12}
            & & TAR@FAR=$10^{-2}$ $\downarrow$ & TAR@FAR=$10^{-3}$ $\downarrow$ & EER $\uparrow$ & DIR@FAR=$10^{-2}$ $\downarrow$ & DIR@FAR=$10^{-3}$ $\downarrow$ & TAR@FAR=$10^{-2}$ $\downarrow$ & TAR@FAR=$10^{-3}$ $\downarrow$ & EER $\uparrow$ & DIR@FAR=$10^{-2}$ $\downarrow$ & DIR@FAR=$10^{-3}$ $\downarrow$ \\
            \midrule
            Original & Direct & 94.263 & 88.136 & 3.950 & $90.359\pm1.105$ & $82.490\pm1.542$ & 96.058 & 91.227 & 2.836 & $93.512\pm0.913$ & $87.126\pm1.362$ \\
            \midrule

            SPAct & Direct & 38.670 & 22.448 & 28.743 & $31.592\pm2.431$ & $16.884\pm1.920$ & 41.869 & 25.797 & 26.540 & $34.644\pm3.153$ & $19.725\pm2.539$ \\ 
            & Recovered & 52.720 & 34.981 & 22.114 & $45.231\pm2.701$ & $28.164\pm2.277$ & 57.430 & 39.607 & 19.875 & $49.336\pm3.017$ & $32.516\pm2.740$ \\

            Controlled & Direct & 6.405 & 2.127 & 42.810 & $4.953\pm0.830$ & $1.573\pm0.418$ & 8.213 & 2.877 & 40.622 & $6.549\pm1.193$ & $2.094\pm0.650$ \\ 
            & Recovered & 12.871 & 5.310 & 37.615 & $10.628\pm1.373$ & $3.913\pm0.704$ & 15.113 & 6.425 & 35.217 & $12.716\pm1.644$ & $4.819\pm0.983$ \\

            Swapping & Direct & 8.118 & 2.943 & 41.027 & $6.543\pm1.020$ & $2.011\pm0.524$ & 9.459 & 3.526 & 39.830 & $7.865\pm1.218$ & $2.617\pm0.715$ \\ 
            & Recovered & 48.530 & 30.754 & 24.023 & $40.231\pm2.527$ & $23.605\pm2.076$ & 52.314 & 34.126 & 22.613 & $44.134\pm2.916$ & $26.839\pm2.448$ \\

            MinusFace & Direct & 11.263 & 4.672 & 38.94 & $9.057\pm1.213$ & $3.131\pm0.738$ & 13.016 & 5.524 & 37.435 & $10.856\pm1.506$ & $4.063\pm0.961$ \\ 
            & Recovered & 91.018 & 81.893 & 6.212 & $87.259\pm1.426$ & $75.624\pm1.969$ & 93.276 & 85.015 & 5.146 & $89.822\pm1.317$ & $79.498\pm1.811$ \\

            K.P.~\cite{xu2026knowledge} & Direct &6.443 & 2.462 & 43.859& $4.507\pm1.034$ & $1.505\pm0.825$& 8.342& 3.114 & 42.011 & $5.978\pm0.617$& $1.923\pm0.854$\\
            & Recovered & 14.423 & 5.905 & 39.872& $11.018\pm1.665$ &$4.006\pm1.407$ & 13.961 & 7.865 & 36.542 & $12.859\pm2.045$& $5.266\pm1.953$ \\

            Pi-Net & Direct & 3.231 & 0.830 & 46.318 & $2.304\pm0.549$ & $0.507\pm0.278$ & 3.911 & 1.055 & 45.116 & $3.176\pm0.744$ & $0.867\pm0.362$ \\ 
            & Recovered & 4.871 & 1.456 & 44.853 & $3.519\pm0.668$ & $0.948\pm0.314$ & 6.298 & 1.983 & 42.924 & $4.981\pm0.996$ & $1.367\pm0.449$ \\
            \midrule

            \textbf{TADP} & Direct & \textbf{2.745} & \textbf{0.637} & \textbf{47.174} & $\mathbf{1.959\pm0.401}$ & $\mathbf{0.417\pm0.219}$ & 4.110 & 1.016 & \textbf{45.338} & $\mathbf{3.049\pm0.667}$ & $\mathbf{0.769\pm0.286}$ \\ 
            & Recovered & \textbf{3.460} & \textbf{0.973} & \textbf{46.247} & $\mathbf{2.546\pm0.532}$ & $\mathbf{0.647\pm0.272}$ & \textbf{5.091} & \textbf{1.461} & \textbf{44.156} & $\mathbf{3.813\pm0.747}$ & $\mathbf{0.942\pm0.372}$ \\
            \bottomrule
            \end{tabular}
    }
\end{table*}

The direct-probe rows quantify identity leakage from the released privacy-preserved clips, while the recovered-probe rows test whether a method-specific temporal recovery attacker increases open-set biometric leakage. The original-video row provides a non-private reference for attacker strength. We interpret verification and identification jointly because a low closed-set recognition rate alone does not establish the ability to reject identities outside the enrolled gallery.

As shown in~\cref{tab:open_set_identity}, TADP provides the strongest overall open-set identity protection among the evaluated methods. On CREMA-D, it yields the lowest TAR and DIR and the highest EER for both direct and recovered probes. On RAVDESS, TADP also achieves the lowest DIR and highest EER for direct probes, although Pi-Net has slightly lower direct-probe TAR; after RVRT recovery, TADP obtains the best results across all reported verification and identification metrics. Moreover, the relatively small change from the direct to recovered TADP rows indicates that the evaluated temporal recovery attacker reveals substantially less additional identity information for TADP than for methods such as Swapping and MinusFace. These results are empirical evidence under the specified attackers and do not replace the formal clip-level DP guarantee.

\subsubsection{Relation Between Empirical Leakage and Formal DP}\label{sec:supp_dp_audit}
The formal and empirical privacy results answer different questions. The clip-level $(\varepsilon,\delta)$ guarantee bounds the distinguishability of the complete output distributions induced by any two clips satisfying the replacement-adjacency definition. In contrast, PLR, cPLR, open-set verification, and recovery attacks measure identity information extracted by specific adversaries under the evaluated data distribution. Consequently, PLR or cPLR cannot be converted directly into an empirical $\varepsilon$, and low attack accuracy does not establish that the theoretical guarantee is tight.

A statistically calibrated empirical audit must instead follow the same adjacency relation as the formal analysis. For an adjacent pair $(X,X')$, the mechanism is run independently multiple times to obtain samples from $\mathcal M(X)$ and $\mathcal M(X')$. An anonymization-aware binary video discriminator is trained on one subset of these releases and evaluated on a disjoint subset. For the event $S_t$ induced by a decision threshold $t$, differential privacy requires
\begin{align}
\Pr[\mathcal M(X)\in S_t]
\leq
e^{\varepsilon}\Pr[\mathcal M(X')\in S_t]+\delta.
\end{align}
Let $p_t$ and $q_t$ denote the true-positive and false-positive probabilities for this event. A finite-sample, one-sided empirical lower bound is
\begin{align}
\varepsilon_{\mathrm{audit}}^{LB}
=\max_t\left\{0,
\log\frac{p_t^{LB}-\delta}{q_t^{UB}}
\right\},
\label{eq:empirical_dp_audit}
\end{align}
whenever $p_t^{LB}>\delta$, where $p_t^{LB}$ and $q_t^{UB}$ are lower and upper Clopper--Pearson confidence limits. The audit must be repeated after exchanging $X$ and $X'$ and for complementary decision events; the maximum across thresholds, directions, and evaluated adjacent pairs is reported. Candidate pairs should include different-identity/same-expression, same-identity/different-expression, different-identity/different-expression, randomly sampled pairs, and pairs with large clipped-encoder distance.

The resulting $\varepsilon_{\mathrm{audit}}^{LB}$ is an attack-dependent empirical lower bound, whereas the accountant provides a mechanism-wide upper bound. If a statistically valid audited lower bound exceeded the accounted value, this would indicate a possible implementation or accounting error. Conversely, an audited lower bound below the accountant value would only show that the evaluated discriminator found no violation; it would not prove implementation correctness, tightness of the theoretical bound, or robustness against every possible distinguisher.

The present PLR, cPLR, open-set, and recovery experiments were not generated from repeated independent releases of fixed adjacent clip pairs and therefore do not satisfy this calibration protocol. We accordingly report them only as operational identity-leakage evidence and do not reinterpret them as empirical estimates of $\varepsilon$. Performing the repeated-release hypothesis-testing audit in~\cref{eq:empirical_dp_audit} is an important direction for a separate implementation-level privacy audit.

\begin{table}[t]
    \centering
    \caption{Comparison of identity feature distributions among original data (O), privacy-preserved data (P) and recovery-attacked data (R) under U-Net and RVRT.}        \resizebox{0.85\linewidth}{!}{
         \begin{tabular}{c|c|ccc|ccc}
      \toprule
      &\multirow{2}{*}{Dataset} & \multicolumn{3} {c|}{FID} & \multicolumn{3} {c}{MMD} \\ \cline{3-8}
      &   & O vs P & O vs R & P vs R &O vs P & O vs R & P vs R  \\
      \midrule
       \multirow{2}{*}{U-Net}& CREMA-D 
       & 130.5908 & 144.9318 & 72.0473 & 0.0308 & 0.0374 & 0.0014\\
       & RAVDESS 
       & 412.3707 & 436.4749 & 356.8630 & 0.0331 & 0.0427 & 0.0057\\
        \midrule
       \multirow{2}{*}{RVRT}& CREMA-D 
       & 133.6474 & 164.9867 & 81.7367 & 0.0325 & 0.0395 & 0.0021\\
       & RAVDESS 
       & 454.1176 & 439.0419 & 350.8485 & 0.0510 & 0.0428 & 0.0052\\
        \bottomrule
      \end{tabular}
        }
    \label{tab:fid_mmd}
\end{table}

\noindent\textbf{Empirical Insights.}
We use Table~\ref{tab:fid_mmd} as descriptive quantitative evidence in ArcFace feature space, rather than as direct proof of anonymity. 
ArcFace~\cite{deng2019arcface} is used to extract identity features, and FID~\cite{heusel2017gans} and MMD measure distributional discrepancy among original data (O), privacy-preserved data (P), and recovery-attacked (R) data.
As shown in~\Cref{tab:fid_mmd}, TADP shifts privacy-preserved identity features away from the original distribution, and the evaluated recovery attacks do not move them back toward the original identity space. Instead, the recovered samples remain closer to the privacy-preserved distribution, suggesting that the induced ArcFace-space identity shift is stable under the evaluated attacks.
The corresponding t-SNE visualizations in \cref{fig:tsne,fig:tsne-rvrt} are consistent with these quantitative results: privacy-preserved and recovery-attacked identity features remain separated from the original-feature distribution under the evaluated ArcFace representation. 
Together, these results support, but do not by themselves prove, that TADP maintains an identity-feature distribution shift under the evaluated recovery attacks.

An in-the-wild cross-domain stress test on DFEW~\cite{jiang2020dfew} reveals substantial performance degradation under diverse distribution shifts, clarifying the method's current scope; complete results and diagnostics are provided in \cref{in-the-wild_data}.

Having established the primary privacy--utility results, we now examine where expression information remains accessible, how stable the mechanism is across noise draws, and how the method behaves under domain shift.

\subsection{Diagnostic and Extended Results}\label{more_results}

\subsubsection{Identity-feature Data Distribution}
\begin{figure}[!ht]
    \centering
    \includegraphics[width=0.49\linewidth]{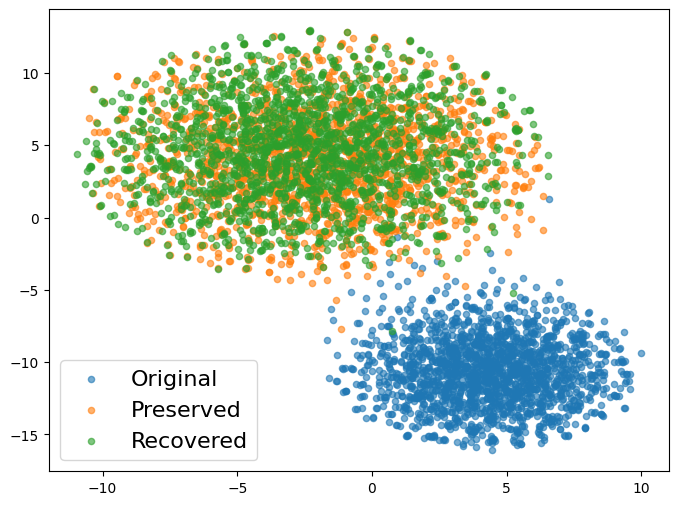}
    \includegraphics[width=0.49\linewidth]{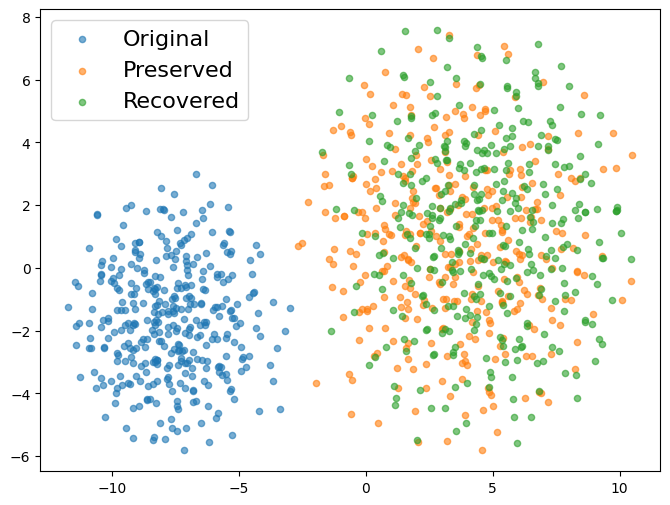}
    \caption{t-SNE visualization of identity features on CREMA-D (left) and RAVDESS (right). Blue, orange, and green points denote original, privacy-preserved, and recovery-attacked identity features (U-Net), respectively.}
    \label{fig:tsne}
\end{figure}

\cref{fig:tsne} visualizes the ArcFace identity-feature distributions before and after the evaluated U-Net recovery attack. On both CREMA-D and RAVDESS, the privacy-preserved features are clearly shifted away from the original features. The U-Net-recovered features substantially overlap with the privacy-preserved features and remain separated from the original-feature cluster, indicating that this attacker does not move the released samples back toward the original identity distribution in the evaluated ArcFace representation. This observation is consistent with the FID/MMD distances and the post-recovery PLR and cPLR results reported above. Nevertheless, t-SNE is a projection-dependent qualitative visualization; it supports the observed distribution shift under this particular attacker and feature extractor, but does not by itself prove anonymity or robustness against arbitrary recovery attacks.

\begin{figure}[!ht]
    \centering
    \includegraphics[width=0.49\linewidth]{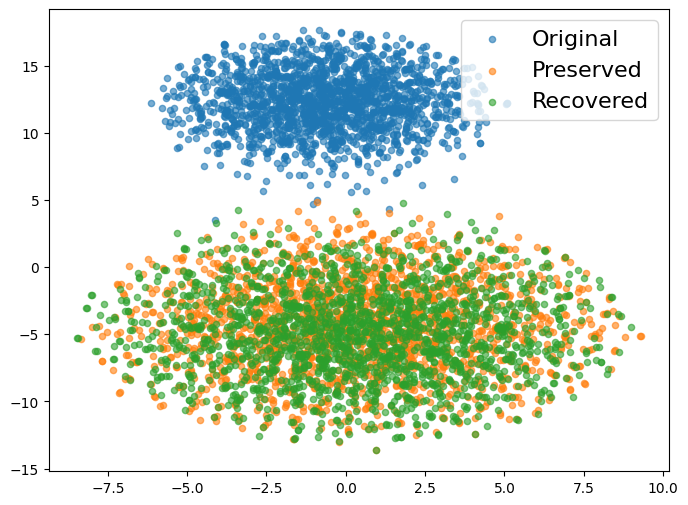}
    \includegraphics[width=0.49\linewidth]{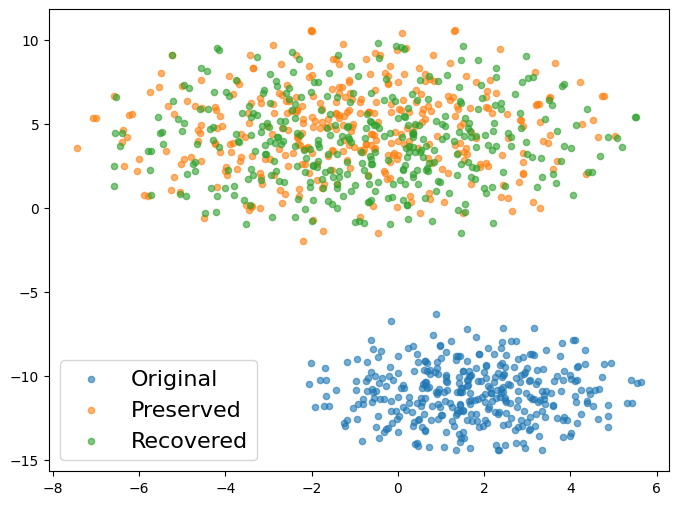}
    \caption{t-SNE visualization of identity features on CREMA-D (left) and RAVDESS (right). Blue, orange, and green points denote original, privacy-preserved, and recovery-attacked identity features (RVRT), respectively.}
    \label{fig:tsne-rvrt}
\end{figure}

\cref{fig:tsne-rvrt} presents the corresponding analysis for RVRT, which can exploit both spatial and temporal information within a privacy-preserved clip. Despite this stronger recovery capability, the RVRT-recovered identity features remain highly intermixed with the privacy-preserved features on both datasets and stay clearly separated from the original-feature distribution. In particular, no visible shift of the recovered cluster toward the original cluster is observed in either CREMA-D or RAVDESS. The qualitative pattern is therefore consistent with that of the frame-based U-Net attacker and with the low post-RVRT PLR and cPLR values in the preceding results, suggesting that temporal recovery does not restore the original ArcFace identity distribution under the evaluated attack protocol. As with the U-Net visualization, this t-SNE result is interpreted only as descriptive evidence and should be considered jointly with the quantitative recovery and identity-leakage metrics.

\subsubsection{Stage-wise Expression Information and Noise Diagnostics}\label{stage_wise}
To explore how expression-related utility is retained under strong Gaussian perturbation, we conduct a stage-wise diagnostic of the representations produced by TADP. We extract features at four locations: (1) the clipped expression representation before privatization, (2) immediately after Gaussian privatization, (3) after soft quantization, and (4) after temporal contextualization. For each latent representation, we freeze TADP, apply temporal average pooling, and train the same linear classifier as an FER probe on the training portion of the actor-disjoint development split. The probes are evaluated on the actor-disjoint test split. We use AdamW with learning rate $10^{-3}$, weight decay $10^{-4}$, cross-entropy loss, a maximum of 50 epochs, and early-stopping patience of 5. Each probe experiment is repeated using seeds 42, 43, and 44.

\begin{table*}[!ht]
    \centering
    \caption{Stage-wise expression information. TADP is frozen and FER is evaluated using temporal mean pooling followed by one linear layer. The released videos are evaluated using R3D,  TimeSformer and VideoMAE V2 trained on the corresponding anonymized training videos. 
    Results are mean$\pm$std over three seeds 42, 43, and 44. 
    The video-level R3D, TimeSformer and VideoMAE V2 entries are single-run results reproduced from the primary evaluation and are shown only as downstream-utility references; they are not directly comparable to the latent-probe accuracies.
    The pre-privatization representation is included as a non-private latent reference and is never released.}
    \label{tab:stagewise_fer}
    \resizebox{0.7\textwidth}{!}{
        \begin{tabular}{lllcc}
        \toprule
        Stage & Evaluated representation & Evaluator & CREMA-D (\%) & RAVDESS (\%) \\
        \midrule
        Random chance & -- & -- & 16.67 & 12.50 \\ 
        \midrule
        Pre-privatization & Clipped expression feature $\bar z^{expr}$ & Linear probe   & $62.03\pm 0.15\%$ & $68.05\pm 0.45\%$ \\
        DP release & Privatized expression feature $\tilde z^{expr}$ & Linear probe     & $20.04\pm 0.11\%$ & $19.54\pm 0.34\%$ \\
        Post-processing & Soft-quantized expression feature $z_q^{expr}$ & Linear probe & $58.80\pm 0.06\%$ & $64.17\pm 0.68\%$ \\
        Post-processing & Temporally modeled feature $\bar h$ & Linear probe            & $61.52\pm 0.09\%$ & $67.04\pm 0.57\%$ \\ 
        \midrule
        Released video & Generated video $\hat X$ & R3D & 83.871 & 82.278 \\
        Released video & Generated video $\hat X$ & TimeSformer & 87.634 & 89.167 \\
        Released video & Generated video $\hat X$ & VideoMAE V2 & 88.011 & 88.889 \\
        \bottomrule
        \end{tabular}
    }
\end{table*}

The stage-wise linear-probe results are reported in \cref{tab:stagewise_fer}. The probe measures linear accessibility rather than the total expression information retained in each representation. Therefore, its accuracy is not an upper bound on the utility achievable by nonlinear post-processing or by the substantially more expressive video-level FER models. The substantial decrease immediately after Gaussian privatization indicates that expression labels are only weakly linearly accessible from the direct DP representation. Soft quantization and temporal contextualization subsequently produce representations that are more useful to the same linear evaluator by exploiting learned population-level and temporal structure. These operations improve the accessibility of expression-related evidence already present in the DP release; they do not introduce additional information about the private input. This increase in empirical task accessibility should not be interpreted as inversion of the Gaussian mechanism or recovery of the original non-private representation; all these modules operate only on privatized representations and remain DP post-processing.

For the final released videos, we follow the evaluation protocol described above and separately train R3D, TimeSformer and VideoMAE V2 on the generated training videos before evaluating them on the corresponding generated test videos. These video-level results are included in \cref{tab:stagewise_fer} only as downstream-utility references. Because R3D, TimeSformer and VideoMAE V2 are substantially more expressive and operate on generated video rather than latent vectors, their accuracies are not directly comparable to the latent linear-probe results.

\paragraph{Input interventions.}
We use the interventions in \cref{tab:input_intervention} to test whether released-video FER is causally dependent on clip-specific privatized evidence rather than solely on class priors or a decoder-side information path. Expression-branch shuffling breaks the correspondence between the expression representation and its source clip; identity-branch shuffling tests whether expression-correlated cues are carried by the identity branch; shuffling both branches removes clip-specific alignment while approximately preserving representation marginals; and the noise-only control sets both clipped inputs to zero before Gaussian sampling. Performance near random chance when clip-specific alignment is removed would support input-dependent utility.

\begin{table}[!ht]
    \centering
    \caption{Inference-time input-intervention analysis. All interventions are applied to the same trained TADP model. Shuffling is performed across clips, preserving the approximate marginal distribution of each representation while breaking its correspondence with the submitted clip.}
    \label{tab:input_intervention}
    \resizebox{\linewidth}{!}{
        \begin{tabular}{l|l|ccc|cc}
        \toprule
            Dataset & Intervention & R3D FER $\uparrow$ & TimeSformer FER $\uparrow$ & VideoMAE V2 FER $\uparrow$ & PLR $\downarrow$ & cPLR $\downarrow$ \\
            \midrule
            CREMA-D
            & Full TADP & 83.871 & 87.634 & 88.011 & 1.237 & 1.452 \\
            & Expression branch shuffled & 31.223 & 34.742 & 35.486 & 1.181 & 1.405 \\
            & Identity branch shuffled & 80.620 & 84.945 & 85.328 & 1.051 & 1.219 \\
            & Both branches shuffled & 18.407 & 19.118 & 18.831 & 0.989 & 1.104 \\
            & Expression branch set to zero & 25.713 & 28.986 & 29.652 & 1.298 & 1.423 \\
            & Noise-only input & 17.220 & 17.876 & 17.556 & 1.089 & 1.085 \\
            \midrule
            RAVDESS
            & Full TADP & 82.278 & 89.167 & 88.889 & 3.889 & 4.444 \\
            & Expression branch shuffled & 24.322 & 27.862 & 28.517 & 3.758 & 4.369 \\
            & Identity branch shuffled & 79.698 & 85.241 & 85.727 & 3.346 & 3.829 \\
            & Both branches shuffled & 13.612 & 14.138 & 13.918 & 3.153 & 3.696 \\
            & Expression branch set to zero & 20.893 & 23.165 & 24.845 & 3.813 & 4.256 \\
            & Noise-only input & 12.911 & 13.334 & 13.157 & 3.212 & 3.558 \\
            \bottomrule
        \end{tabular}
    }
\end{table}

The observed results in~\cref{tab:input_intervention} follow this pattern. Shuffling the expression branch reduces FER from $83.871$-$88.011\%$ to $31.223$-$35.486\%$ on CREMA-D and from $82.278$-$89.167\%$ to $24.322$-$28.517\%$ on RAVDESS. In contrast, shuffling only the identity branch causes a much smaller reduction, suggesting that most expression-relevant evidence is carried by the privatized expression branch rather than by identity-correlated cues. Shuffling both branches or using noise-only input reduces performance to near the dataset-specific random-chance levels, while PLR and cPLR remain low. Thus, the released-video utility depends on clip-specific privatized inputs and cannot be explained solely by a decoder or classifier prior.

\paragraph{Stability across independent noise draws.}
For the repeated-release analysis in~\cref{tab:noise_draw_stability}, the input clips, trained TADP parameters, downstream classifiers, and all public preprocessing are fixed, while only the Gaussian random seed is changed. We report mean$\pm$standard deviation across 20 complete releases. Prediction agreement is the proportion of releases whose predicted expression label agrees with the modal prediction for that clip. This experiment evaluates whether the reported FER utility is stable across valid mechanism randomness rather than being attributable to a favorable single noise realization.

\begin{table}[t]
    \centering
    \caption{Utility stability over independent Gaussian releases of the same test clips. FER is measured after regenerating the complete test set for each release seed.}
    \label{tab:noise_draw_stability}
    \resizebox{\linewidth}{!}{
        \begin{tabular}{l|ccc|c}
            \toprule
            Dataset & R3D FER & TimeSformer FER & VideoMAE V2 FER & Prediction agreement \\
            \midrule
            CREMA-D & $83.76\pm0.31$ & $87.51\pm0.28$ & $87.92\pm0.24$ & $94.13\pm0.97$ \\
            RAVDESS & $82.15\pm0.44$ & $88.94\pm0.36$ & $88.62\pm0.31$ & $92.67\pm0.86$ \\
            \bottomrule
        \end{tabular}
    }
\end{table}

Across the 20 releases in~\cref{tab:noise_draw_stability}, all three FER evaluators exhibit small standard deviations: at most $0.31$ percentage points on CREMA-D and $0.44$ percentage points on RAVDESS. Prediction agreement is also high, reaching $94.13\pm0.97\%$ and $92.67\pm0.86\%$, respectively. The mean accuracies remain close to the single-release results in the primary evaluation. Therefore, the reported expression utility is stable across independent Gaussian draws and is not attributable to a favorable random seed.

\paragraph{Adaptive-scale and signal--noise diagnostics.}
We also calculate the realized noise scale and report the results in \cref{tab:noise_statistics}. Here, $s_{b,i}$ is the per-coordinate standard deviation of the Gaussian perturbation, rather than the norm of the complete perturbation vector. For a $d_b$-dimensional representation, $\|s_{b,i}\xi_{b,i}\|_2$ concentrates around $s_{b,i}\sqrt{d_b}$. The per-frame noise/signal ratio is
\begin{equation}
    \frac{\|s_{b,i}\xi_{b,i} \|_2}{\| \bar z_{i}^{b} \|_2 + 10^{-12}}.
\end{equation}
We exclude padded frame positions from these test-set statistics.

\begin{table}[!ht]
    \centering
    \caption{Realized adaptive-scale statistics over all non-padded frames in the actor-disjoint test sets.}
    \label{tab:noise_statistics}
    \resizebox{0.99\linewidth}{!}{
        \begin{tabular}{lllccccc}
            \toprule
            Dataset & Branch & Mechanism & $s_{\min}$ & Observed Min & Observed Mean & Std & Observed Max \\
            \midrule
            CREMA-D & ID         & A6 static & 8.921  & 8.921  & 8.921  & 0 & 8.921  \\
            CREMA-D & ID         & TADP adaptive & 8.921  & 8.921  & 9.275  & 0.1623 & 14.348  \\
            CREMA-D & Expression & A6 static & 11.406 & 11.406 & 11.406 & 0 & 11.406 \\
            CREMA-D & Expression & TADP adaptive & 11.406 & 11.406 & 11.891 & 0.1939 & 13.024 \\
            RAVDESS & ID         & A6 static & 17.088 & 17.088 & 17.088 & 0 & 17.088 \\
            RAVDESS & ID         & TADP adaptive & 17.088 & 17.088 & 17.667 & 0.2450 & 25.107 \\
            RAVDESS & Expression & A6 static & 15.197 & 15.197 & 15.197 & 0 & 15.197 \\
            RAVDESS & Expression & TADP adaptive & 15.197 & 15.197 & 16.633 & 0.2306 & 18.834 \\
            \bottomrule
        \end{tabular}
    }
\end{table}

For A6, the zero standard deviation refers to the deterministic scale $s_{b,i}=s_{b,\min}$, not to the sampled Gaussian perturbation, which remains random. TADP does not improve utility by using a scale below A6: its realized scales are modestly larger on average and vary with preceding privatized history. The comparison instead evaluates whether downstream modules trained jointly with a history-conditioned, heteroscedastic release process can better exploit temporal reliability patterns. The resulting gains are therefore consistent with the benefits of temporal noise shaping, rather than evidence that the adaptive mechanism adds less noise.

\Cref{tab:noise_statistics} confirms that every realized adaptive scale remains at or above its guaranteed lower bound. Because the table reports three decimal places, adaptive minima that are slightly above the floor may appear equal to $s_{b,\min}$ after rounding. 
The small standard deviations and observed ranges indicate empirically limited temporal scale variation rather than uncontrolled variation. In particular, the mean scales are $9.275/11.891$ for the identity/expression branches on CREMA-D and $17.667/16.633$ on RAVDESS. These realized values are diagnostic only; the formal accountant conservatively uses the corresponding guaranteed lower bounds.

\begin{table}[t]
    \centering
    \caption{Clipped-feature and Gaussian-perturbation magnitudes over all non-padded frames in the actor-disjoint test sets. Both quantities are measured before the fixed global rescaling, and their noise-to-signal ratio is invariant to this rescaling. We report mean$\pm$standard deviation for the norms and median with 5th-95th percentiles for the per-frame ratio.}
    \label{tab:signal-noise-statistics}
    \resizebox{0.99\linewidth}{!}{
        \begin{tabular}{llccc}
            \toprule
            Dataset & Branch & Clipped-feature norm & Perturbation norm & Noise/signal ratio \\
            & & $\|\bar z_i^b\|_2$ & $\|s_{b,i}\xi_{b,i}\|_2$ & $\frac{\|s_{b,i}\xi_{b,i}\|_2}{\|\bar z_i^b\|_2}$ \\
            \midrule
            CREMA-D & ID & $ 0.269 \pm 0.093$ & $ 296.7\pm 8.4$ & $1166.6\,[669.9,2031.4]$ \\

            CREMA-D & Expression & $ 0.422 \pm 0.039$ & $ 380.4\pm10.5$ & $904.9\,[772.4,1060.2]$ \\

            RAVDESS & ID & $ 0.341\pm 0.104$ & $565.2\pm14.8$ & $1732.3\,[1058.6,2834.5]$ \\

            RAVDESS & Expression & $ 0.460\pm 0.052$ & $500.1\pm13.3$ & $1093.7\,[904.1,1323.2]$ \\
            \bottomrule
        \end{tabular}
    }
\end{table}

As reported in~\cref{tab:signal-noise-statistics}, the full perturbation norm is much larger than the clipped-feature norm in every setting, with median noise-to-signal ratios ranging from $904.9$ to $1732.3$. The identity branch has a larger ratio than the expression branch on both datasets, consistent with allocating stronger protection to identity-related information. These large vector-norm ratios arise because the reported scale is a per-coordinate standard deviation and the perturbation norm grows approximately with $\sqrt{d_b}$. Together with the intervention and stability results, the measurements show that learned DP post-processing and temporal aggregation make the expression-related evidence retained in the DP release more accessible to downstream models despite the strong direct perturbation. The ratios themselves are diagnostics and should not be interpreted as an empirical privacy budget.

\paragraph{Soft-quantization diagnostics.}
\begin{table}[!ht]
    \centering
    \caption{Soft-quantization diagnostics over non-padded test frames. Assignment entropy is computed using natural logarithms; with 1,000 codebook entries, uniform assignment has entropy $\log(1000)=6.91$. Cross-draw agreement is the proportion of independent releases selecting the same top-1 codebook entry for the same frame.}
    \label{tab:codebook_diagnostics}
    \resizebox{\linewidth}{!}{
        \begin{tabular}{ll|cccc}
            \toprule
            Dataset & Branch & Codebook norm & Assignment entropy & Max probability & Cross-draw agreement \\
            \midrule
            CREMA-D & ID & $0.98\pm0.07$ & $1.84\pm0.31$ & $0.43\pm0.09$ & 0.06 \\
            CREMA-D & Expression & $1.04\pm0.09$ & $2.11\pm0.28$ & $0.37\pm0.08$ & 0.09 \\
            RAVDESS & ID & $1.01\pm0.08$ & $1.76\pm0.34$ & $0.46\pm0.10$ & 0.05 \\
            RAVDESS & Expression & $1.06\pm0.10$ & $2.04\pm0.30$ & $0.39\pm0.08$ & 0.08 \\
            \bottomrule
        \end{tabular}
    }
\end{table}

The diagnostics in \cref{tab:codebook_diagnostics} characterize how soft quantization behaves under strong perturbation. We report the distribution of codebook-entry norms, soft-assignment entropy, maximum assignment probability, and top-1 assignment agreement across independent Gaussian draws. Low top-1 agreement does not by itself imply unstable video-level utility because downstream processing uses the complete soft assignment and aggregates evidence temporally. Conversely, entropy close to $\log(1000)$ would indicate nearly uniform assignments and would weaken the interpretation that the learned codebook provides structured post-processing. We therefore interpret these diagnostics jointly with the input-intervention and repeated-release results.

The measured assignment entropies in~\cref{tab:codebook_diagnostics} range from $1.76$ to $2.11$, substantially below the uniform-assignment entropy of $6.91$, and the mean maximum assignment probabilities range from $0.37$ to $0.46$. Hence, the soft assignments remain structured and non-uniform despite the strong perturbation. At the same time, top-1 cross-draw agreement is only $0.05$-$0.09$, indicating that individual hard code selections are sensitive to the Gaussian draw. Considered together with the stable released-video accuracy in~\cref{tab:noise_draw_stability}, these results support using the complete soft assignment and temporal aggregation rather than relying on a single hard codebook entry.

The preceding diagnostics examine the mechanism under the controlled-domain test distributions. We finally evaluate how the complete pipeline behaves under a substantially different, unconstrained data distribution.

\subsubsection{Cross-domain Stress Test on an In-the-Wild Dataset}
\label{in-the-wild_data}

To examine the practical scope of TADP beyond controlled recording conditions, we additionally conduct a cross-domain stress test on DFEW~\cite{jiang2020dfew}. Compared with CREMA-D and RAVDESS, DFEW contains substantially larger variations in head pose, illumination, occlusion, background, resolution, and recording conditions. We use this experiment as a stress test rather than as evidence of successful in-the-wild deployment.

We separately evaluate the original DFEW videos, TADP reconstruction without DP perturbation, and the complete TADP mechanism with clip-level DP. The original-video result characterizes the performance of the FER evaluator on DFEW itself. The reconstruction-without-DP result helps separate degradation caused by domain mismatch in the expression encoder, codebooks, temporal modules, and decoder from the additional degradation introduced by privatization. Weighted average recall (WAR) and unweighted average recall (UAR) measure expression-recognition utility. We additionally report the expression-branch clipping rate as a diagnostic of feature-distribution shift and temporal LPIPS (tLPIPS) for generated-video stability.

\begin{table}[!ht]
\centering
    \caption{Cross-domain stress test on the in-the-wild DFEW dataset. Original-video performance provides a task-specific reference, while reconstruction without DP perturbation separates reconstruction and domain-shift errors from the additional degradation caused by privatization.}
    \label{tab:dfew_stress_test}
    \resizebox{\linewidth}{!}{
        \begin{tabular}{l|cc|cc}
        \toprule
            Input / setting
            & WAR $\uparrow$
            & UAR $\uparrow$
            & Expression clipping rate $\downarrow$
            & tLPIPS $\downarrow$ \\
        \midrule
            Original DFEW videos
            & 55.708 & 45.115 & -- & -- \\
            TADP reconstruction without DP perturbation
            & 34.822 & 27.644 & 48.735 & 0.156 \\
            TADP with clip-level DP
            & 20.912 & 16.839 & 48.761 & 0.211 \\
        \bottomrule
        \end{tabular}
    }
\end{table}

Our evaluation in \cref{tab:dfew_stress_test} shows substantial degradation on DFEW, including reduced performance on the original-video reference and further degradation after privacy-preserving reconstruction. The reconstruction-without-DP condition indicates that this behavior cannot be attributed solely to Gaussian perturbation; it also reflects domain mismatch in the expression encoder, codebooks, temporal modules, and decoder. The current system is calibrated and trained primarily using controlled-domain facial data and does not adequately cover the broader feature distribution induced by unconstrained pose, illumination, occlusion, background, and resolution variations. We therefore do not claim that the current model generalizes reliably to in-the-wild videos.

This stress test exposes an important limitation of the present framework. Future work will investigate more diverse codebook and decoder pre-training, domain-robust expression representations, in-the-wild training and calibration data, and dataset-specific clipping and privacy calibration. We report this negative result to clarify the current empirical scope of TADP rather than presenting controlled-dataset performance as evidence of unconstrained real-world deployment.

\subsection{Ablation Study}

We conduct ablations on components tied to the privacy mechanism in \cref{sec:adaptive-dp} and the post-processing design in \cref{sec:soft-quantization}. 
A1 privatizes only the identity branch and serves as a diagnostic baseline rather than a valid end-to-end DP mechanism, since the expression branch remains data-dependent. 
A2 privatizes both branches with equal target-loss allocation. 
A3 removes soft quantization. 
A4 and A5 keep the full dual-branch mechanism but use low ($\tau_{id}=0.1, \tau_{expr}=0.1$) and high ($\tau_{id}=1, \tau_{expr}=1$) soft-quantization temperatures, respectively. 
A6 is a matched static-Gaussian ablation trained from scratch with the static mechanism: it keeps the same encoders, clipping thresholds, codebooks, decoder, and final clip-level $\varepsilon$ as TADP, but uses $s_{b,i}\equiv s_{b,\min}$ during both training and inference. Training from scratch under the static mechanism avoids a train--test mismatch in the privatized latent distribution.
A7 removes the intra-clip Transformer, and A8 removes the inter-clip GRU.
\begin{table*}[ht]
\centering
\caption{Ablation study results for the dual-branch mechanism.}

\resizebox{\textwidth}{!}{
\begin{tabular}{c|c|c|cc|cc|cc|cc|cc|cc|cc|cc|cc}
\toprule
\multirow{2}{*}{Variant} & \multirow{2}{*}{DP?} & \multirow{2}{*}{$\rho_{id},\rho_{expr}$} & \multicolumn{2}{c|}{R3D: FER Acc. (\%) $\uparrow$} &\multicolumn{2}{c|}{TimeSformer: FER Acc. (\%) $\uparrow$} & \multicolumn{2}{c|}{VideoMAE V2: FER Acc. (\%) $\uparrow$} & \multicolumn{2}{c|}{PLR (\%) $\downarrow$} & \multicolumn{2}{c|}{cPLR (\%) $\downarrow$} & \multicolumn{2}{c|}{FWE$\downarrow$} & \multicolumn{2}{c|}{FVD$\downarrow$} & \multicolumn{2}{c|}{tLPIPS$\downarrow$} & \multicolumn{2}{c}{T-ID$\uparrow$} \\ \cline{4-21}
& & & CREMA-D & RAVDESS & CREMA-D & RAVDESS &CREMA-D & RAVDESS & CREMA-D & RAVDESS & CREMA-D & RAVDESS & CREMA-D & RAVDESS & CREMA-D & RAVDESS & CREMA-D & RAVDESS & CREMA-D & RAVDESS \\ \midrule
A1 & No & --            & 84.221 & 83.181 & 88.436& 89.686 & 89.652&90.273 &3.387 & 5.556 & 3.899 & 6.592& 0.0245 & 0.0211 & 290.3011 & 251.5844 & 0.0138 & 0.0258 & 0.7803 & 0.7923 \\
A2 & Yes & 0.5, 0.5     & 53.954 & 42.591 & 52.582& 52.513 & 54.637&58.397 &2.043 & 4.444 & 2.933& 5.231& 0.1056 & 0.0987 & 302.1953 & 275.1840 & 0.0992 & 0.1304 & 0.5309 & 0.5812 \\
A3 & Yes & same as full & 24.839& 25.000 & 27.956 & 26.944 & 26.370& 26.113&1.505 & 4.166 & 2.018&4.658 & 0.6743 & 0.7941 & 702.5868 & 689.4917 & 0.8702 & 0.8910 & 0.0859 & 0.0764 \\
A4 & Yes & same as full & 62.957 & 65.833 &62.903 & 64.722 & 67.370& 68.104& 1.290 & 4.722 & 2.548 & 4.801& 0.0485 & 0.0397 & 327.0942 & 309.8031 & 0.0853 & 0.0944 & 0.5428 & 0.6041 \\
A5 & Yes & same as full & 73.226 & 72.778 & 80.108& 79.722 & 82.138& 81.643& 1.559 & 5.000 &2.354 & 4.737& 0.0281 & 0.0285 & 287.2386 & 261.5593 & 0.0258 & 0.0312 & 0.7903 & 0.7544 \\
A6 & Yes & same as full & 74.945 & 75.371 & 79.547 & 80.145& 79.879& 81.961& 1.183& 3.889 & 1.925& 4.334& 0.1403  & 0.1349  & 395.3437  & 403.6952  & 0.1686  & 0.1834 &0.4932 &0.5242\\
A7 & Yes & same as full & 83.602 & 82.500 & 87.527 & 88.611& 87.204& 88.213& 1.237 & 4.167 &2.061 & 4.457 & 0.2481  & 0.2648  & 481.3472  & 493.5496 &  0.6910 &  0.7304 &0.5951 &0.6194\\
A8 & Yes & same as full & 83.656 & 83.056 & 87.581 & 88.333& 88.344& 89.136& 1.237  & 4.167 & 1.557& 4.542& 0.1894  & 0.1917  & 407.5265  & 403.2188  & 0.5915  & 0.6032 &0.6482 &0.6925 \\

\bottomrule
\end{tabular}}
\label{tab:required_dual_branch_ablation}
\end{table*}

Table~\ref{tab:required_dual_branch_ablation} shows that each component affects the privacy--utility frontier. A1 keeps strong utility and temporal quality but is not a valid full DP release, while A2 shows that equal allocation over-protects expression features. Among the valid DP variants, removing soft quantization (A3) causes the largest degradation in FER utility and temporal quality. 
A6 supports the benefit of temporal adaptivity without reducing noise below the calibrated floor; realized static--adaptive scale statistics and their interpretation are reported in \cref{tab:noise_statistics}.
The temperature study further shows that overly sharp quantization is harmful, while a higher-temperature setting recovers most utility and temporal stability. A7 and A8 confirm that both temporal modules contribute to consistency.

\begin{table}[!t]
\centering
\caption{Ablation of the clip length at the same final clip-level privacy budget. Runtime and memory are measured on a single NVIDIA H100 GPU with a batch size of 2 on the test datasets.}

\resizebox{0.99\linewidth}{!}{
\begin{tabular}{c|c|c|c|c|c|c|c|c}
\toprule
\multirow{2}{*}{$T$} & \multicolumn{2}{c|}{(R3D) FER Acc.$\uparrow$} & \multicolumn{2}{c|}{PLR$\downarrow$} &  \multicolumn{2}{c|}{Time$\downarrow$} & \multicolumn{2}{c}{Memory$\downarrow$} \\ \cline{2-9}
& CREMA-D & RAVDESS & CREMA-D & RAVDESS & CREMA-D & RAVDESS  & CREMA-D & RAVDESS \\
\midrule
8  & 84.301\% & 83.056\% & 1.290\% & 4.722\% & 21'32 & 4'57 & 58GB & 62GB\\
16 & 83.871\% & 82.278\% & 1.237\% & 3.889\% & 21'45 & 4'59 & 66GB & 73GB\\
32 & 53.011\% & 55.278\% & 1.129\% & 4.167\% & 22'14 & 5'08 & 78GB & 85GB \\
\bottomrule
\end{tabular}}
\label{tab:sequence_length_ablation}
\end{table}

We also study the accounting clip length. Since one $T$-frame clip is the privacy unit, changing $T$ affects both temporal context and the number of composed frame-level releases.
As shown in Table~\ref{tab:sequence_length_ablation}, 
$T=8$ and $T=16$ retain substantially higher FER utility than $T=32$, while $T=16$ slightly reduces empirical identity leakage relative to $T=8$ at increased memory cost.
These results support using a moderate fixed accounting unit: clips should be long enough to capture facial dynamics, but overly long clips make privacy accounting and temporal modeling harder to optimize jointly.

Finally, we expand the ablation beyond sequence length to examine how the target privacy budget, branch-wise allocation, and effective-noise lower bounds interact.

% \subsubsection{Additional Privacy Operating Points}

\begin{table}[!ht]
\centering
\caption{Privacy--utility trade-off under jointly accounted dual-branch clip-level privacy budgets on CREMA-D.}
\resizebox{0.99\linewidth}{!}{
\begin{tabular}{c|c|c|c|c|c|c|c|c|c|c}
\toprule
Target $\varepsilon$ & Achieved $\varepsilon$ & $\rho_{id},\rho_{expr}$ & $s_{id,\min}, s_{expr,\min}$ & FER Acc.$\uparrow$ & PLR$\downarrow$ &cPLR$\downarrow$& FWE$\downarrow$ &FVD$\downarrow$ & tLPIPS$\downarrow$ & T-ID$\uparrow$ \\
\midrule
1 & 0.994 & 0.1, 0.9 & 69.124, 36.080 & 22.634\% & 1.129\% &1.453\%& 0.2491 & 467.8613 & 0.6089& 0.2548 \\
2 & 1.985 & 0.2, 0.8 & 24.716, 19.351 & 25.161\% & 1.183\% &1.937\%& 0.1986 & 368.6407 & 0.1201& 0.4164 \\
4 & 3.981 & 0.4, 0.6 & 8.921, 11.406 & 83.871\% & 1.237\% &1.452\%& 0.0214 & 284.8561  & 0.0317 & 0.7937 \\
8 & 7.954 & 0.5, 0.5 & 4.251, 6.656 & 53.954\% & 2.043\% &2.025\%& 0.1056 & 302.1953 & 0.0992  & 0.5309  \\
\bottomrule
\end{tabular}
}

\label{tab:ab_sigma_cremad}
\end{table}

\begin{table}[!ht]
\centering
\caption{Privacy--utility trade-off under jointly accounted dual-branch clip-level privacy budgets on RAVDESS.}
\resizebox{0.99\linewidth}{!}{
\begin{tabular}{c|c|c|c|c|c|c|c|c|c|c}
\toprule
Target $\varepsilon$ & Achieved $\varepsilon$ & $\rho_{id},\rho_{expr}$ & $s_{id,\min},s_{expr,\min}$ & FER Acc.$\uparrow$ & PLR$\downarrow$ & cPLR$\downarrow$&FWE$\downarrow$ &FVD$\downarrow$ & tLPIPS$\downarrow$ & T-ID$\uparrow$ \\
\midrule
1 & 0.991 & 0.1, 0.9 & 87.160, 39.466 & 25.556\% & 4.167\% &4.491\%&0.6104 & 385.6503 & 0.8045 & 0.2108 \\
2 & 1.983 & 0.3, 0.7 & 25.406, 22.593 & 28.333\% & 4.444\% &4.821\%& 0.0511 & 329.0132 & 0.1458 & 0.5311 \\
3 & 2.988 & 0.3, 0.7 & 17.088, 15.197 & 82.278\% & 3.889\% &4.444\%& 0.0203 & 249.5107 & 0.0414 & 0.8032 \\
8 & 7.962 & 0.5, 0.5 & 5.342, 7.256 & 43.889\% & 5.000\% & 6.333\%&0.0704 & 333.6845 & 0.0961 & 0.6853 \\
\bottomrule
\end{tabular}
}
\label{tab:ab_sigma_ravdess}
\end{table}

\paragraph{Joint effect of privacy budget and branch allocation.}
\label{various_epsilon}
We further study the final clip-level privacy budget. For each target $\varepsilon$, branch-wise target-loss coefficients and effective-noise lower bounds are calibrated jointly, and the achieved privacy loss is recomputed using \cref{eq:clip_rdp,eq:rdp_conversion}. Since $\rho_{id}$ and $\rho_{expr}$ calibrate branch-specific noise but do not weight privacy losses during composition, utility depends on both the final budget and its branch allocation.
\Cref{tab:ab_sigma_cremad,tab:ab_sigma_ravdess} show that the best operating point requires balancing the formal budget, branch allocation, and effective-noise lower bounds. Very small budgets suppress identity leakage but remove expression-relevant information, while a larger budget does not automatically improve utility when the branch allocation disrupts the identity--expression balance. This supports the design of jointly calibrating privacy budget allocation, soft quantization, and temporal decoding.

\section{Conclusion and Limitations}

We presented TADP, a dual-branch framework for expression-aware video anonymization with inference-time differential privacy.
It privatizes both identity and expression representations under fixed-length clip-replacement adjacency. 
Branch losses are composed within each frame and across frames within a clip using RDP, then converted to $(\varepsilon,\delta)$-DP; multiple jointly released clips are additionally composed at the RDP level before a single source-video conversion $(\varepsilon,\delta)$. 
Temporal adaptation depends only on preceding privatized releases, while soft quantization, fusion, temporal contextualization, and decoding are DP post-processing.
Several limitations remain. The formal guarantee is defined at the clip level, assumes a trusted mechanism with raw-source access, requires additional composition when multiple clips from the same source video are jointly released, and does not cover training membership or external metadata. 
Our DFEW stress test also shows utility degradation under domain shift involving pose, illumination, occlusion, background, and resolution variation. 
Future work will study robust in-the-wild training and calibration, human perception, and tighter long-video accounting.

{
    \small
    \bibliographystyle{ieeenat_fullname}
    \bibliography{0.refs}
}

\end{document}